\newtheorem{lemma}{Lemma}
\newtheorem{theorem}{Theorem}
\newtheorem{definition}{Definition}
\title{\LARGE \bf
Asynchronous Spatial-Temporal Allocation for Trajectory Planning of Heterogeneous Multi-Agent Systems
}
\author{ Yuda Chen, Haoze Dong, and  Zhongkui Li,~\IEEEmembership{Senior Member,~IEEE}
}
\begin{document}

\maketitle
\thispagestyle{empty}
\pagestyle{empty}

\begin{abstract}

To plan the trajectories of a large-scale heterogeneous swarm, 
sequentially or synchronously distributed methods usually become intractable due to the lack of global clock synchronization.
To this end, we provide a novel asynchronous spatial-temporal allocation method.
Specifically, between a pair of agents, the allocation is proposed to determine their corresponding derivable time-stamped space and can be updated in an asynchronous way, by inserting a waiting duration between two consecutive replanning steps. 
Via theoretical analysis, the inter-agent collision is proved to be avoided and the allocation ensures timely updates.
Comprehensive simulations and comparisons with five baselines validate the effectiveness of the proposed method and illustrate its improvement in completion time and moving distance.
Finally, hardware experiments are carried out, where $8$ heterogeneous unmanned ground vehicles with onboard computation navigate in cluttered scenarios with high agility. 

\end{abstract}

\vspace{0.1cm}
\section*{Supplementary Materials}

\textbf{Video:} https://youtu.be/au3fhqbySOE

\textbf{Code:} https://github.com/CYDXYYJ/ASAP

\vspace{0.1cm}
\section{Introduction}

Collision-free trajectory planning plays an essential role for a swarm of agents navigating in a shared workspace \cite{liu2024ltl}.
The simplest method is to adopt a central coordinator to decide every agent's motion \cite{Honig2018,Li2021}.
However, this centralized solution becomes unrealistic and intractable for large-scale swarms, due to the limited interaction range and unbearable computation time.
Thus, an increasing number of works, e.g., \cite{Zhou2022,Jesus2021,Rey2018} pursue distributed solutions, where the underlying agents can decide their own actions 
via local communication with others.
Related researches gradually advance from sequentially distributed methods~\cite{Richards2004} to synchronously concurrent distributed \cite{Luis2019}, and further to asynchronously distributed ones \cite{senbaslar2022}. 

For decentralized planning, the commonly-used methods are sequentially distributed ones, e.g., \cite{Zhou2022,Richards2004,Chen2015,Morgan2016,cap2015}, 
by which the participants replan their trajectories in a sequence.
As such, the runtime only linearly increases with the number of agents.
By comparison, the concurrent distributed methods, e.g., \cite{Park2022,Toumieh2022,Roya2020,Chen2022,Chen2022-2}, 
where different agents concurrently replan trajectories, can further decrease the computation complexity. 
Concurrent methods can be divided into two categories: synchronous and asynchronous.
The former requires both global clock synchronization and timetable coordination to synchronize the calculation duration.
This, nevertheless, becomes unrealistic for heterogeneous multi-agent systems, since diversified calculation time and indirect communication are inherent characteristics of those systems.

To fulfill an asynchronous motion planning, the authors in~\cite{Alonso2018} extend the reciprocal velocity obstacles methods \cite{Berg2011} to agents with kinodynamic constraints.
Nonetheless, the short time horizon in \cite{Alonso2018} implies that it is not eligible for high-agility cases.
MADER in \cite{Jesus2021} introduces a novel Check-Recheck deconfliction scheme, in which the Check period verifies whether an agent's optimized trajectory collides with those committed by other agents, while the Recheck period assesses whether an agent has received any new trajectories during the Check period.
However, each agent is only concerned with other agents' current trajectories, neglecting those upcoming ones. 
Similar problem can be found in \cite{Kota2023} and \cite{Ma2022}.
Meanwhile, the authors in \cite{senbaslar2022} suggest an asynchronous decentralized trajectory planner capable of ensuring safety despite communication delays and interruptions.
This method extends the  Buffered Voronoi Cells~\cite{Zhou2017} to the asynchronous setting by enlarging the makespan of the constraints which may over-constrain the solution space \cite{Kota2023}.
Another noteworthy work \cite{Ferranti2023} employs the asynchronous alternating direction method of multipliers (ADMM) to resolve the non-convex constraints. 
To ensure safety despite packet loss, the method inflates the collision region and continues local ADMM iterations using the latest available predictions of neighbors.
However, it lacks a formal proof of collision avoidance.
In summary, asynchronously distributed planning methods with safety guarantees and high efficiency are still in demand.

\begin{figure} [t]
	\centering
	\includegraphics[width=0.95\linewidth]{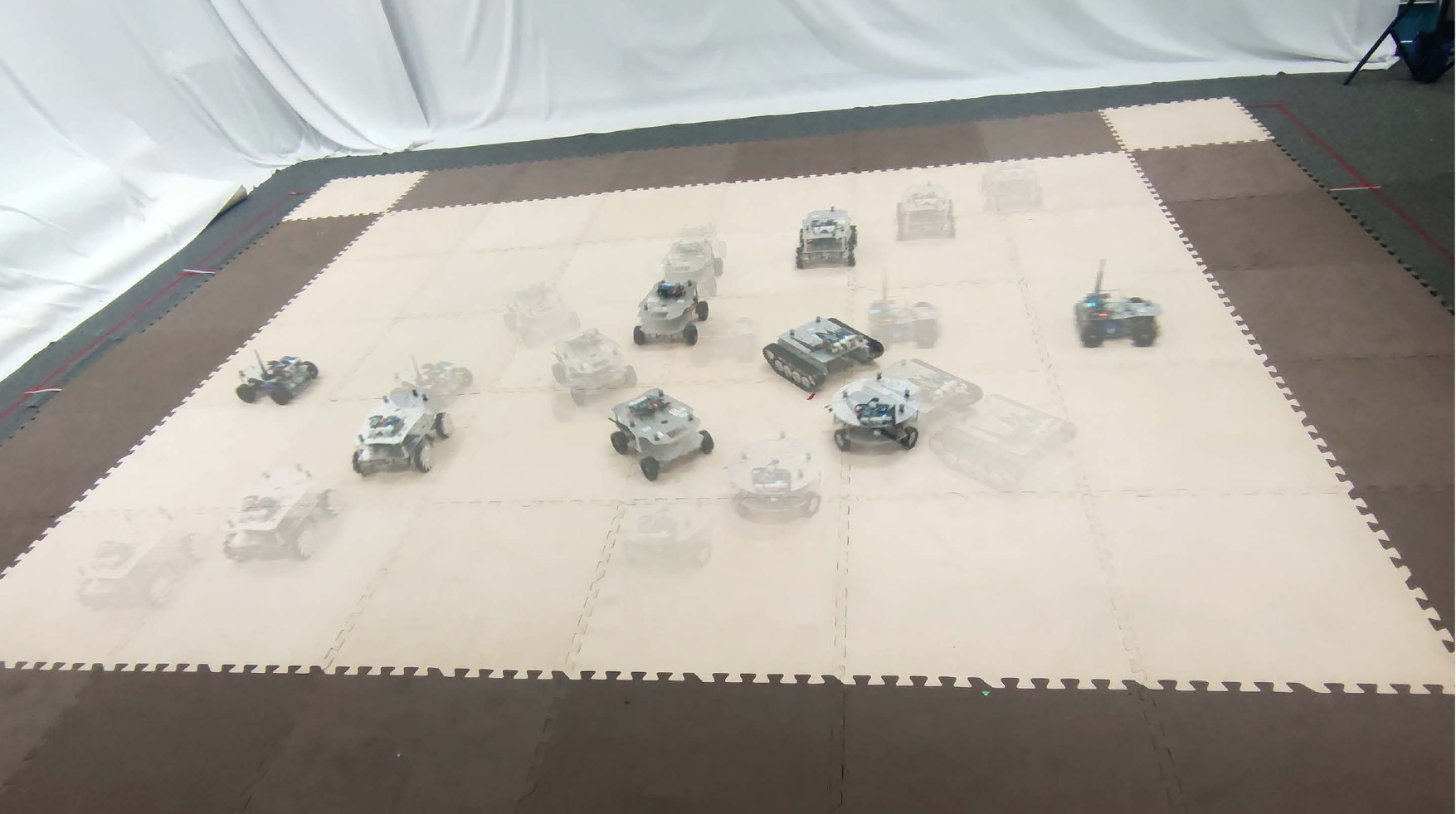}
	\caption{ Eight UGVs are crossing a crowded region. }
	\label{figure:antipodal-8}
	\vspace{-0.1in}
\end{figure}

To this end, a novel method named \textbf{A}synchronous \textbf{S}patial-\textbf{T}emporal \textbf{A}llocation (ASTA) is proposed to solve the online trajectory planning problem for heterogeneous multi-agent systems in an asynchronously distributed manner.
For a single agent, the allocation toward a neighbour is defined as a sequence of time-stamped half space.
The allocation is updated asynchronously based on the agents’ trajectories through local communication.
To achieve this, an interval called \emph{waiting time} is inserted between two consecutive replanning of each agent to allow for allocation updates.
To avoid the potential inter-agent collision when the agents adopt different versions of allocations, the update is purposely restricted to a specific duration.
It is theoretically proved that by obeying this allocation, the inter-agent collision can be avoided. 
In addition, the update frequency of the allocation is shown to have a tailored lower bound.

The contributions of the paper are summarized as follows:

\begin{itemize}
	\item[$\bullet$]
        By comparison with other asynchronously distributed methods such as \cite{Jesus2021} and \cite{Ferranti2023}, the planner can pre-determine the derivable space and concern other agents' upcoming trajectories. 
        Thus, the planned trajectory theoretically ensures inter-agent collision avoidance without the need for a recheck.	
	
	\item[$\bullet$]
	Compared to \cite{Zhou2022,Park2022}, the proposed method allows the underlying agents to replan their trajectories by merely considering their own schedules, 
	which makes it applicable for different dynamic models, e.g., double integrator, unicycle, and bicycle.
	
	\item[$\bullet$] 
	Numerical simulations and comparisons with five state-of-the-art methods \cite{Zhou2022,Jesus2021,Luis2019,Park2022,Chen2022} validate the effectiveness of our method and also illustrate the improvement in the completion time and the moving distances.
	   In the hardware experiments (Fig.~\ref{figure:antipodal-8}), $8$ heterogeneous unmanned ground vehicles (UGVs) with onboard computation finish two tasks in high agility, where the UGVs' average speed can reach up to $80\%$ of their maximum speed. 
\end{itemize}

\section{Problem Formulation}

Consider a group of $N$ agents with different kinds of dynamic models in a shared 2D workspace.
The objective is to drive these agents from their current states to their corresponding targets $x_\text{target}^i$, $i \in \mathcal{N} \triangleq \{ 1, 2, \ldots, N \}$, without \emph{any} collision under asynchronous local communication.

\subsection{Dynamic Models}\label{subsec:dynamic model}

For agent $i \in \mathcal{N}$, its state and control input at time $t$ are denoted as $x^i(t)\in \mathbb{R}^m$ and $u^i(t)\in \mathbb{R}^q$, $m,q \in \mathbb{Z}^+$, respectively.
Its dynamic model is described as follows: 
\begin{equation} \label{dynamic-model}
	\dot{x}^i(t) = f^i (x^i(t), u^i(t)), ~\ x^i(t) \in \mathcal{X}^i, \ u^i(t) \in \mathcal{U}^i,
\end{equation}
where $\mathcal{X}^i$ and $\mathcal{U}^i$ denote the set of available states and control inputs, respectively;
$f^i(\bullet)$ reflects the differential constraint. 

Note that the dynamic models of the agents in \eqref{dynamic-model} are heterogeneous, which include the commonly-encountered double integrator, unicycle and bicycle as special cases, elaborated as follows:
\begin{itemize}
	\item  Double integrator: $x=[p_x,p_y,v_x,v_y]^\mathrm{T}$, 
	$u=[a_x,a_y]^\mathrm{T}$, $f(x,u)=[v_x,v_y,a_x,a_y]^\mathrm{T}$, where 
	$\mathcal{X}=\left\{x \in \mathbb{R}^4 \ | \ |v_x|,|v_y| \leq v_\text{max} \right\}$, $\mathcal{U}=\left\{u \in \mathbb{R}^2 \ | \ |a_x|,|a_y| \leq a_\text{max} \right\}$;
	\item  Unicycle: $x=[p_x,p_y,\theta,v]^\mathrm{T}$, 
	$u=[a,\omega]^\mathrm{T}$, $f(x,u)=[v \cos \theta,v \sin \theta,\omega,a]^\mathrm{T}$, where $\mathcal{X}=\left\{x \in \mathbb{R}^4 \ | \  0 \leq v \leq v_\text{max} \right\}$, 
	$\mathcal{U}=\left\{u \in \mathbb{R}^2 \ | \ |\omega| \leq \omega_\text{max}, |a| \leq a_\text{max} \right\} $;
	\item  Bicycle: $x=[p_x,p_y,\theta,v]^\mathrm{T}$, $u=[a,\delta]^\mathrm{T}$, 
	$f(x,u)=[v \cos \theta,v \sin \theta,v \tan \delta / L,a]^\mathrm{T}$, where $L \in \mathbb{R}^+$, $\mathcal{X}=\left\{x \in \mathbb{R}^4 \ | \  0 \leq v \leq v_\text{max} \right\}$, 
	$\mathcal{U}=\left\{u \in \mathbb{R}^2 \ | \ |\delta| \leq \delta_\text{max}, |a| \leq a_\text{max} \right\} $;
\end{itemize}  
in which $p$, $v$, $a$, $\theta$, $\omega$, $\delta$ and $L$ represent the agent's position, linear velocity, acceleration, heading angle, angular velocity, steering angle and wheelbase, and the subscripts $x$, $y$ denote the $x$, $y$ coordinates, respectively.

\subsection{Inter-Agent Collision Avoidance}

In order to characterize the inter-agent collision avoidance, we first define the representation of the agents' trajectories as follows:
\begin{definition}\label{def:trajectory}
    The trajectory of agent $i$ between time span $[t_a,t_b]$ is defined as 
    \begin{equation*}
        \mathcal{T}raj^i (t_a,t_b) \triangleq \left\{  S^i \left( x^i(t) \right) \times t \ | \ t \in [t_a,t_b]  \right\}.
    \end{equation*}
    The shape of agent $i$ at time $t$ is denoted by a convex polygon $S^i \left(x^i(t)\right)$.
\end{definition}

At any time $t>0$, for any pair of agents $i$ and $j$, they are collision-free if and only if  $S^i \left(x^i(t)\right) \cap S^j \left(x^j(t)\right)=\emptyset$.
Accordingly, their trajectories are collision-free if and only if $\mathcal{T}raj^i (0,+\infty) \cap \mathcal{T}raj^j (0,+\infty)=\emptyset$.

\subsection{Problem Statement} \label{sec:problem-statement}

In this work, the trajectory planning is carried out in a receding horizon way, 
wherein $x_{n^i}^i(t)$ is the planned state at $t$ in agent $i$'s $n^i$-th replanning. 
Additionally, the planning horizon of the $n^i$-th replanning is denoted as $T_{n^i}^i \in \mathbb{R}^+$.
The time when agent $i$ finishes its $n^i$-th replanning step is denoted as $t_{n^i}^i$, which is also the starting time of the $n^i$-th trajectory. 
The $n^i$-th replanned trajectory is denoted as $\mathcal{T}raj_{n^i}^i(t_{n^i}^i, +\infty)$.
Particularly, for the time beyond the planning horizon, i.e., $t > t_{n^i}^i + T_{n^i}^i$, we enforce $x_{n^i}^i(t) = x_{n^i}^i(t_{n^i}^i + T_{n^i}^i)$.



Then, for agent $i$ in its $n^i$-th replanning, the trajectory generation problem can be formulated as the following optimal control problem (OCP):
\begin{subequations} \label{original-optimal-control}
	\begin{align} 
		\min_{x_{n^i}^i(t), u_{n^i}^i(t)} & \ C(x_{n^i}^i, u_{n^i}^i), \notag \\
		\text{s.t.} \ &\eqref{dynamic-model},  \notag \\
		&x^i(t_{n^i}^i) = \hat{x}^i(t_{n^i}^i), \label{initial-cons} \\
		&\mathcal{T}raj_{n^i}^i \cap \mathcal{T}raj^j = \emptyset, \ \forall j \neq i, \label{collision-cons} 
	\end{align}
\end{subequations}
where $t \in [t_{n^i}^i,t_{n^i}^i+T_{n^i}^i]$; $\hat{x}^i(t_{n^i}^i)$ is the predicted state at $t_{n^i}^i$; and $C(x_{n^i}^i, u_{n^i}^i)$ is the objective function, defined as
\begin{equation*}
\begin{aligned}
	&C(x_{n^i}^i, u_{n^i}^i) \\&\quad= \int_{t_{n^i}^i}^{t_{n^i}^i+T_{n^i}^i} {\delta x_{n^i}^i(\tau)}^\mathrm{T} Q \ {\delta x_{n^i}^i(\tau)} + {u_{n^i}^i(\tau)}^\mathrm{T} P \ u_{n^i}^i(\tau) d \tau, 
\end{aligned}
\end{equation*}
 which is utilized to drive agent $i$ to its target and penalize the control inputs $u_{n^i}^i(\tau)$, where 
$\delta x_{n^i}^i(\tau) = x_{n^i}^i(\tau) - x^i_\text{target}$, 
$Q$ and $P$ are the weighted matrices.
In the ideal case, the lower-level controller is assumed to perfectly follow the planned trajectory such that $\hat{x}^i(t_{n^i}^i) = \mathcal{T}raj_{n^i-1}^i(t_{n^i}^i)$.

Note that the collision avoidance constraints~\eqref{collision-cons} are impossible to be directly established at the replanning stage, since the exact future motions of other agents are unavailable. 
To solve this problem, we propose a method to dynamically reformulate constraints~\eqref{collision-cons} such that the OCP~\eqref{original-optimal-control} can be formulated and solved in an asynchronous manner while the inter-agent collision is theoretically avoided.

\section{Proposed Method}\label{section:proposed_solution}

As illustrated in Fig.~\ref{figure:component}, the proposed method has three main components: 
i) Spatial-Temporal Allocation (STA), consisting of a sequence of half spaces with time stamps $t_i$ ($i=1,2,\ldots$), is used to determine feasible time-stamped space w.r.t another agent (the yellow part).
ii) Between a pair of agents, STA is asynchronously updated based on their trajectories towards a specific duration (the gray part).
iii) Trajectory planning via STA aims to confine each agent's planned trajectory in its respective feasible time-stamped space (the red part). 
These components will be presented in the following three subsections.

\begin{figure}
	\centering
	\includegraphics[width=0.95\linewidth]{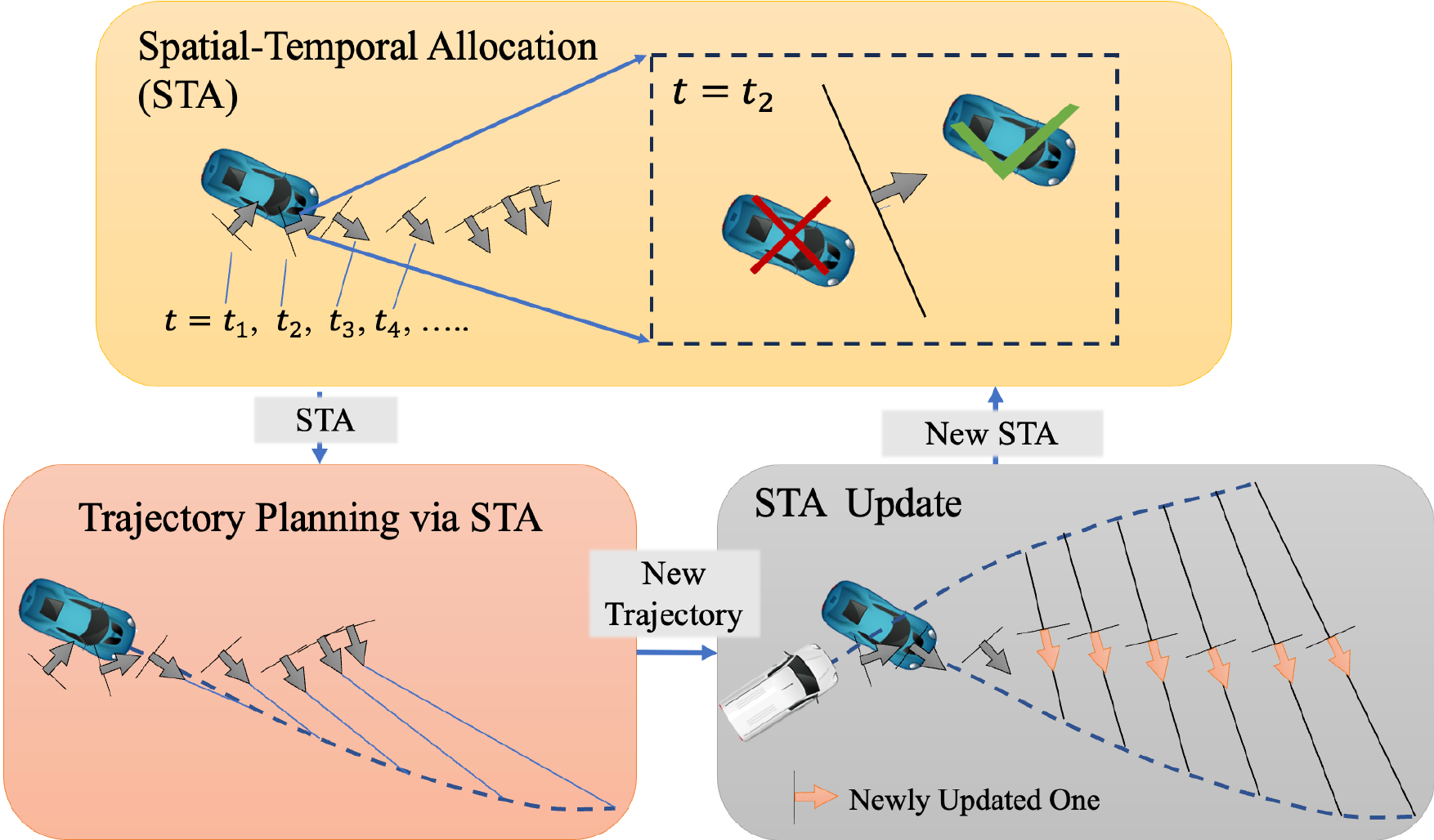}
	\caption{The components of the proposed method. A solid line and its normal vector are used to demonstrate a half space, where the solid line represents its boundary and the normal vector points toward its safe region.}
	\label{figure:component}
	\vspace{-0.4cm}
\end{figure}

\subsection{Spatial-Temporal Allocation}

STA between agents $i$ and $j$ aims to allocate the time-stamped feasible region for replanning, 
so as to avoid collision between agents.
The definition of STA is given below.

\begin{definition} \label{def:STA}
	The Spatial-Temporal Allocation (STA) of agent $i$ w.r.t. agent $j$ is 
	\begin{equation*}
		\mathbb{A}^{ij} = \left\{ \mathcal{H}^{ij}(t) \times t \ | \ t \in [t^{R_1},+\infty) \right\},
	\end{equation*}
	where $\mathcal{H}^{ij}(t) \triangleq \{p \ | \ {a^{ij}}^T (t) \ p > b^{ij}(t)\}$ represents a half space in 2D space,
    $a^{ij} (t) \in \mathbb{R}^2$ is a normal vector of the boundary of $\mathcal{H}^{ij}(t)$ pointing to the interior of $\mathcal{H}^{ij}(t)$ and $b^{ij}(t) \in \mathbb{R}$ is the offset.  We call $\mathcal{H}^{ij}(t)$ a \textbf{T}ime-\textbf{S}tamped \textbf{H}alf \textbf{S}pace (TSHS).
	Moreover, $t^{R_1}$ is the time when these agents establish communication.
	Conversely, we have $\mathcal{H}^{ji}(t) \triangleq \{p \ | \ {a^{ji}}^T (t) \ p > b^{ji}(t)\}$, where $a^{ji}(t) = -a^{ij} (t)$, $b^{ji}(t) = -b^{ij}(t)$.
	As a result, it is clear that  
	$\mathcal{H}^{ij}(t) \cap \mathcal{H}^{ji}(t) = \emptyset$ and $\mathbb{A}^{ij} \cap \mathbb{A}^{ji} = \emptyset$.
\end{definition}

Since STA is updated along with the replanning process, 
we use $\mathbb{A}_m^{ij}$ to represent the allocation after the $({m-1})$-th update in the sequel.

\subsection{Allocation Update} \label{subsection:protocol}

Allocation update is a core part in our method which keeps the collision avoidance constraints up-to-date. 
For agent $i$, there is a waiting time $T_w^i$ between two consecutive trajectory calculation times $T_c^i$ (see Fig.~\ref{protocol-process}). In this work, we particularly enforce that $T_w^i > T_c^i$, in order to ensure a lower bound of the updating frequency. 

At time $t^i_{n^i-1}$ when the $(n^i-1)$-th replanning step is finished, the newly planned trajectory $\mathcal{T}raj_{n^i-1}^i$ is broadcast to other waiting agents.
Specifically, if agent $j$ stays at waiting time after its $n^j$-th replanning, it will reach a \emph{renewal} with agent $i$ based on $\mathcal{T}raj_{n^j}^j$ and $\mathcal{T}raj_{n^i-1}^i$. 
Fig.~\ref{protocol-process} provides an illustration.
During the waiting time, agent $i$ can also receive other agents' data and reach renewals.
A renewal is defined as follows:
\begin{equation*}
    \mathcal{R}_m^{ij} \triangleq \{ \mathcal{H}^{ij}(t) \times t \ | \ t \in [t_s^{R_m},+\infty) \},
\end{equation*}
where $t_s^{R_m} \triangleq \max\left\{ t_{(n^i-1)+1}^i, t_{n^j+1}^j \right\}$ is the start time of the $m$-th renewal.
Moreover, a part of this renewal from $t_a$ to $t_b$ is denoted as $\mathcal{R}_m^{ij} (t_a,t_b) \triangleq \{ \mathcal{H}^{ij}(t) \times t \ | \ t \in [t_a,t_b) \}$. 


The process that produces a renewal via $\mathcal{T}raj_{n^i-1}^i$ and $\mathcal{T}raj_{n^j}^j$ is  to generate hyperplane in order to split the space into two halves $\mathcal{H}^{ij}(t)$ and $\mathcal{H}^{ji}(t)$ based on $S^i(x_{n^i-1}^i(t))$ and $S^j(x_{n^j}^j(t))$ for $t \in [t_s^{R_m}, +\infty)$. 
According to the separating hyperplane theorem \cite{Boyd2004}, if $S^i(x_{n^i-1}^i(t)) \cap S^j(x_{n^j}^j(t))=\emptyset$, then we can always find a hyperplane that separates those two polygons. 
One of the possible implementations of separating hyperplane is using GJK algorithm \cite{Gilbert1988,Park2022}.


\begin{figure}
	\centering
	\includegraphics[width=0.95\linewidth]{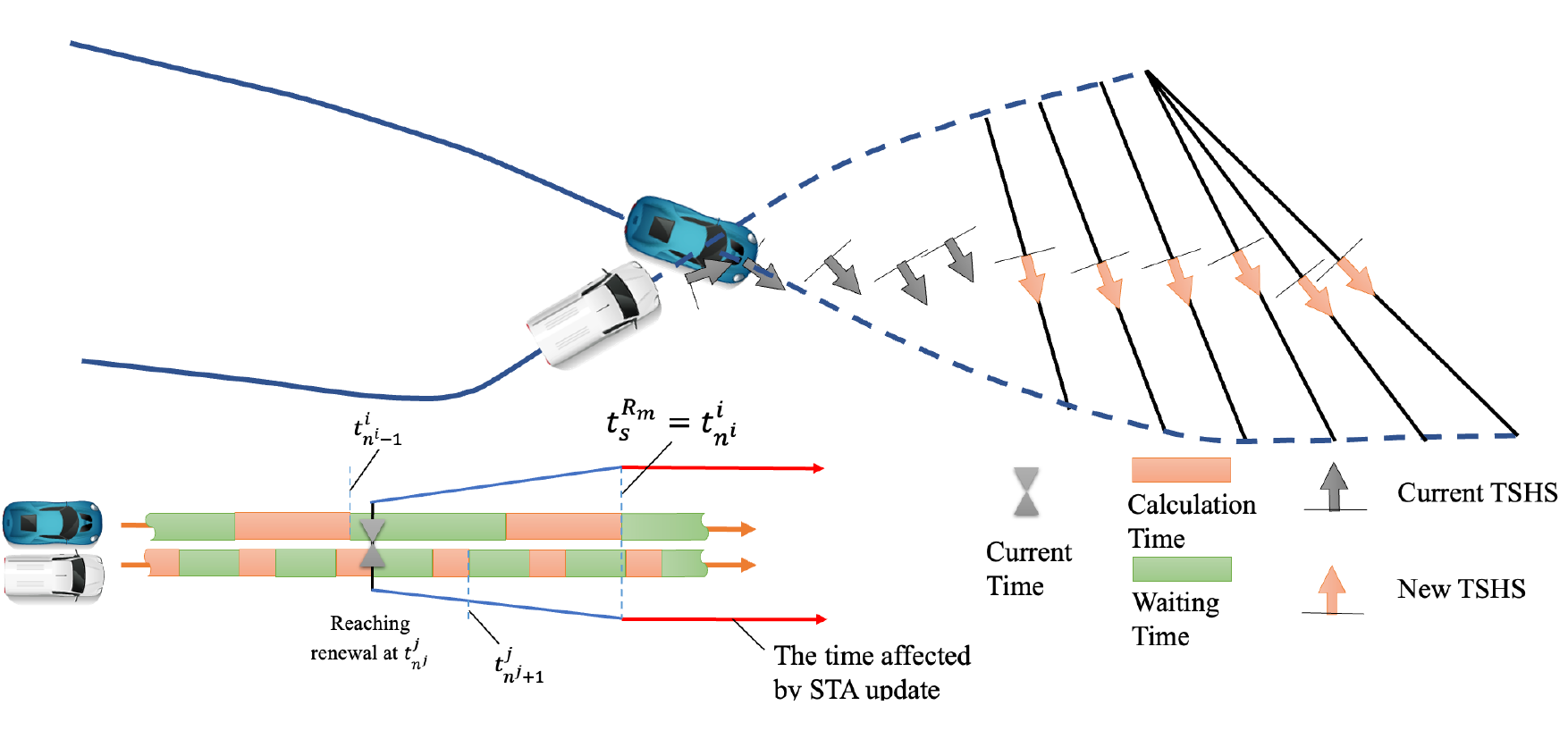}
	\caption{A demonstration about the STA update for two cars. Both cars stay at waiting time and reach a new renewal, and the TSHS beyond $t_s^{R_m}$ are regenerated to adapt the new situation. }
	\label{protocol-process}
	\vspace{-0.2in}
\end{figure}


Notably, despite the fact that the duration concerned is $[t_s^{R_m}, +\infty)$ for the $m$-th renewal, 
only the duration $[t_s^{R_m}, t_e^{R_m} )$, where $t_e^{R_m} \triangleq \max \left\{ t_{n^i-1}^i+T_{n^i-1}^i, t_{n^j}^j+T_{n^j}^j \right\}$, needs to be considered.
Because for $t > t_e^{R_m}$, $S^i(x_{n^i-1}^i(t)) = S^i(x_{n^i-1}^i(t_e^{R_m}))$ and $S^j(x_{n^j}^j(t)) = S^j(x_{n^j}^j(t_e^{R_m}))$ hold according to the assumption in Section~\ref{sec:problem-statement} that for $t > t_{n^i}^i + T_{n^i}^i$, $x_{n^i}^i(t) = x_{n^i}^i(t_{n^i}^i + T_{n^i}^i)$.
Thus, we have $\mathcal{H}^{ij}(t) = \mathcal{H}^{ij}(t_e^{R_m})$ as well as 
$\mathcal{H}^{ji}(t) = \mathcal{H}^{ji}(t_e^{R_m})$ for $t > t_e^{R_m}$.

For STA, when the first renewal starts, it is initialized as $\mathbb{A}_1^{ij} = \mathcal{R}_1^{ij}(t^{R_1},+\infty)$.
In particular, for the first renewal, we enforce $t_s^{R_1} = t^{R_1}$ to avoid collision during $ [t^{R_1},t_s^{R_1}]$.
Then, for a new renewal $\mathcal{R}_m^{ij}$, the STA is updated as follows:
\begin{equation} \label{protocol-update}
	\mathbb{A}_m^{ij} = \mathbb{A}_{m-1}^{ij}(t^{R_1},t_s^{R_m}) \cup \mathcal{R}_m^{ij}(t_s^{R_m},+\infty),
\end{equation}
where
$
	\mathbb{A}_{m-1}^{ij}(t^{R_1},t_s^{R_m}) = \left\{ \mathcal{H}^{ij}(t) \times t \ | \ t \in [t^{R_1},t_s^{R_m}) \right\}.
$
In this updating mechanism, STA is only updated for the time beyond $t_s^{R_m}$, which can also be   reflected in Fig.~\ref{protocol-process}.

A pair of agents reach their renewal if and only if one of them just finishes its replanning and the other one stays at waiting time. 
This is a typical asynchronous method and thus called \emph{Asynchronous} Spatial-Temporal Allocation.

\subsection{Trajectory Planning Using STA}\label{subsec:Trajectory_planning}

In OCP~\eqref{original-optimal-control}, during agent $i$'s $n^i$-th replanning,
given $\mathbb{A}_m^{ij}$ as the latest STA with agent $j$,
the constraints~\eqref{collision-cons} can be replaced by 
\begin{equation} \label{original-obey-protocol}
	\mathcal{T}raj_{n^i}^i \subset \mathbb{A}_m^{ij}, \ j \in \mathcal{N}^i, 
\end{equation}
where $\mathcal{N}^i$ is the set of agents that have ever established communication with agent $i$.
Based on this replacement, 
we further reformulate the OCP~\eqref{original-optimal-control}:
\begin{subequations} \label{optimal-control}
	\begin{align}
		\min_{x_{n^i}^i(t), u_{n^i}^i(t)} & \ C(x_{n^i}^i, u_{n^i}^i) \notag \\
		\text{s.t.} \ &\eqref{dynamic-model}, \eqref{initial-cons}, \notag \\
		&S^i(x_{n^i}^i(t)) \in \mathcal{H}^{ij} (t), \label{finite-obey-protocl-1} \\
		&S^i(x_{n^i}^i(t_{n^i}^i+T_{n^i}^i)) \in \mathcal{H}^{ij} (t^\prime), \label{finite-obey-protocl-2}
	\end{align}
\end{subequations}
where $t \in [t_{n^i}^i,t_{n^i}^i+T_{n^i}^i]$ and $t^\prime \in (t_{n^i}^i+T_{n^i}^i,t_e^{R_m}]$.  
Note that the constraints \eqref{original-obey-protocol} consider an intractably infinite time interval $t \in [t_{n^i}^i, +\infty)$.
We replace it with constraints~\eqref{finite-obey-protocl-1} and \eqref{finite-obey-protocl-2} which only consider a finite horizon, thus facilitating the application.
The reason why OCP~\eqref{original-optimal-control} with constraints~\eqref{original-obey-protocol} is equal to OCP \eqref{optimal-control} can be found in the Lemma~\ref{lemma:finite-horizon} of Appendix~\ref{sec:appendix-1}.
In real implementation, the underlying continuous OCP will be reformulated and resolved in a discrete form, which will be stated in Section~\ref{implementation}.

\subsection{The Overall Algorithm}

\begin{algorithm}[t] \label{AL:ASAP}
	\caption{The Complete Algorithm}\label{algorithm}
	\SetKwInOut{Input}{Input}
	\Input{$x^i(0)$, $x^i_\text{target}$}
	$\mathcal{T}raj_{0}^i = \left\{  S \left( x^i(0) \right) \times t \ | \ t \in [0,+\infty)  \right\}$ \label{algline:initial}\;
	\While{\rm{agent $i$ not reaching target} \label{algline:impc-while}}
	{
		$\mathcal{N}^i \leftarrow$ scan communication network \label{algline:scan}\;
		send $\mathcal{T}raj^i_{n^i-1}$ to agent $j \in \mathcal{N}^i$ \label{algline:intiate-agreement}\;
		\For{ $j \in \mathcal{N}^i$ concurrently}{
			open receiver in a new thread \label{algline:open-receiver}\;
			update allocation as \eqref{protocol-update} if receiving feedback \label{algline:update-protocol}\;
			close receiver \label{algline:close-receiver}\;
		}
		$x^i_{n^i} \leftarrow$ Trajectory planning as \eqref{optimal-control} \label{algline:OCP}\;
		send $x^i_{n^i}$ to the lower-level controller\label{algline:execute}\;
	} 
\end{algorithm}   

The overall planning algorithm built up by ASTA is outlined in Alg.~\ref{AL:ASAP}.
At the beginning, 
each agent initializes its trajectory (Line~\ref{algline:initial}), where we enforce that $x^i(0) \in \mathcal{X}^i_e \triangleq \left\{ x \ | \ \exists u, f^i (x, u)=0 \right\}$.
Afterwards, the agent scans its communication network to find all neighbors (Line~\ref{algline:scan}) and then broadcasts its  trajectory via communication network (Line~\ref{algline:intiate-agreement}).
Then, for each neighbor, 
it opens an independent thread to wait for other agents' feedback (Line~\ref{algline:open-receiver}).
Once receiving a confirmation signal, the allocation will be updated as in equation~\eqref{protocol-update}.
Thereafter, the receivers regarding all neighbors are closed (Line~\ref{algline:close-receiver}) and the new trajectory is planned according to OCP~\eqref{optimal-control}.
However, in this work, the feasibility of the OCP cannot be guaranteed, which highlights an area for our future work.
To tackle this, if the given OCP~\eqref{optimal-control} is infeasible, 
the previous trajectory will be continuously adopted.
Finally, the trajectory is sent to the lower-level controller, which controls the agent's motion (Line~\ref{algline:execute}).
The whole loop will be carried out indefinitely until this agent reaches its target position.

\subsection{Algorithm Analysis} \label{subsec:analysis}

Note that there may happen potential collision between a pair of agents due to conflicts in their safety constraints, since they may adopt different versions of allocations.
To avoid this, the proposed method is designed to partially update STA after a specific time $t_s^{R_m}$ as \eqref{protocol-update}.
Thanks to such a design, inter-agent collision can be theoretically avoided, which is supported by the following theorem.

\begin{theorem} \label{theorem:safety}
	For a pair of agents $i$ and $j$, if i) agents $i$'s and $j$'s trajectories do not collide with each other when they establish their allocation, and ii) in every following replanning step, they obey their corresponding allocations, then $\forall t > t^{R_1}$, they will not collide with each other.
\end{theorem}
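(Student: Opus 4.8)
The plan is to derive collision avoidance from the single structural fact, built into Definition~\ref{def:STA}, that $\mathcal{H}^{ij}(t) \cap \mathcal{H}^{ji}(t) = \emptyset$ at every time stamp. Fix an arbitrary instant $t > t^{R_1}$, and let $\mathcal{T}raj_n^i$ and $\mathcal{T}raj_{n'}^j$ be the trajectories the two agents are actually executing at $t$ (their latest replannings with start time $\le t$). It suffices to exhibit one pair of complementary TSHS, $\mathcal{H}^{ij}(t)$ and $\mathcal{H}^{ji}(t)$, with $S^i(x_n^i(t)) \in \mathcal{H}^{ij}(t)$ and $S^j(x_{n'}^j(t)) \in \mathcal{H}^{ji}(t)$; their disjointness then gives $S^i(x_n^i(t)) \cap S^j(x_{n'}^j(t)) = \emptyset$, i.e. no collision at $t$.

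The first step is to identify the TSHS that governs the instant $t$. The renewals between $i$ and $j$ form a single shared, totally ordered sequence $\mathcal{R}_1^{ij}, \mathcal{R}_2^{ij}, \dots$ (each is a joint event of the pair) whose start times $t_s^{R_1} = t^{R_1} \le t_s^{R_2} \le \cdots$ are non-decreasing and divergent, since $T_w^i > T_c^i$ forces replanning times to diverge and each renewal advances a replanning counter. Hence there is a well-defined last renewal $\mathcal{R}_m^{ij}$ with $t_s^{R_m} \le t$. Because the update~\eqref{protocol-update} rewrites the allocation only at time stamps $\ge t_s^{R_{m'}}$, every version $\mathbb{A}_k^{ij}$ with $k \ge m$ carries at time $t$ the \emph{same} TSHS, namely the one produced by $\mathcal{R}_m^{ij}$, and likewise for $\mathbb{A}_k^{ji}$; these two are complementary by construction. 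This pair is the candidate for $\mathcal{H}^{ij}(t)$, $\mathcal{H}^{ji}(t)$ above.

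The second, and main, step is to show that $\mathcal{T}raj_n^i$ and $\mathcal{T}raj_{n'}^j$ were each planned against an allocation version $\ge m$, so that by constraints~\eqref{finite-obey-protocl-1}--\eqref{finite-obey-protocl-2} together with the finite-horizon reformulation (Lemma~\ref{lemma:finite-horizon}) they respect, at time $t$, exactly the TSHS of $\mathcal{R}_m^{ij}$. This is where the choice $t_s^{R_m} = \max\{t_{(n^i-1)+1}^i, t_{n^j+1}^j\}$ does the work: when $\mathcal{R}_m^{ij}$ is reached, both agents receive it before starting their next replanning, so it is folded into each agent's \emph{next} replanned trajectory, and $t_s^{R_m}$ is no earlier than the completion time of either of those next replannings. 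Therefore, by the instant $t \ge t_s^{R_m}$, each agent has already completed a replanning accounting for $\mathcal{R}_m^{ij}$, and the trajectory it executes at $t$ is that one or a later one — in either case planned against version $\ge m$; and since no later renewal touches the time stamp $t$ (their start times exceed $t$), the TSHS honored at $t$ is exactly that of $\mathcal{R}_m^{ij}$. The base case $m=1$ is where hypothesis (i) enters: with $t_s^{R_1} = t^{R_1}$, the trajectories current at establishment are disjoint over the relevant horizon, so $\mathcal{R}_1^{ij}$ exists by the separating-hyperplane construction, the trajectory active at $t^{R_1}$ respects it by construction, and all subsequent replannings respect it by hypothesis (ii). Stitching the intervals $[t_s^{R_m}, t_s^{R_{m+1}})$ over $m$ then covers every $t > t^{R_1}$.

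I expect the second step to be the main obstacle: making rigorous, under asynchronous clocks and generally mismatched allocation versions held by the two agents, the statement that by the time a renewal's TSHS becomes active at $t_s^{R_m}$ both agents have committed trajectories honoring it there. This needs careful bookkeeping of, for each agent, the index of the first replanning following a given renewal, a bound on its completion time by $t_s^{R_m}$, and a check that no replanning computed \emph{before} that renewal was received is ever the one being executed at a time stamp $\ge t_s^{R_m}$ — precisely what is bought by defining $t_s^{R_m}$ as a maximum over both agents' next-replanning times and by $T_w^i > T_c^i$. A secondary point is that renewal generation always succeeds, i.e. the polygons to be separated are disjoint over $[t_s^{R_m}, t_e^{R_m})$ whenever a renewal is formed, which I would take to be implicit in the running of the algorithm under hypothesis (ii); the monotonicity and divergence of $t_s^{R_m}$ then keep the interval decomposition of $(t^{R_1}, +\infty)$ valid.
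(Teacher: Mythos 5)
Your proposal is correct and takes essentially the same route as the paper's own proof: both arguments rest on the disjointness $\mathcal{H}^{ij}(t)\cap\mathcal{H}^{ji}(t)=\emptyset$ from Definition~\ref{def:STA}, on the fact that the update \eqref{protocol-update} alters the allocation only at time stamps beyond $t_s^{R_m}$, on $t_s^{R_m}$ being the maximum of both agents' next-replanning completion times (so every trajectory executed at $t\ge t_s^{R_m}$ was planned against version $m$ or later, which agree at $t$), and on hypothesis (i) for the initial segment starting at $t^{R_1}$. The difference is purely presentational — the paper decomposes $[t_s^{R_m},t_s^{R_{m+1}})$ and writes the executed trajectory as a union of planned pieces while you argue pointwise at an arbitrary $t$ — and your write-up is, if anything, more explicit about the version-agreement bookkeeping that the paper asserts tersely.
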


\begin{proof} 
	Please refer to Appendix~\ref{sec:appendix-2}.
\end{proof}

Besides safety guarantee, another key point is that as the allocation is updated in a triggered way, a lower bound of updating frequency is required.
Otherwise, due to the out-of-date allocation, the efficiency of trajectory planning of the swarm will be adversely affected.
Therefore, we enforce that the waiting time $T_w^i$ is longer than the computation time $T_c^i$ in Section~\ref{subsection:protocol}.
Thus, the following property can be obtained.

\begin{theorem} \label{theorem:lower-bound-frequency}
	Between agents $i$ and $j$, 
	the lower bound of the frequency of the allocation update is 
    \vspace{-0.05in}
    \begin{equation*}
        \mathcal{F} = \left ( \min \left\{  T_c^i,  T_c^j \right\} + \max \left\{ T_c^i + T_w^i, T_c^j + T_w^j \right\} \right )^{-1}.
    \end{equation*}
\end{theorem}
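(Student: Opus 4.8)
The plan is to upper-bound the elapsed time between two consecutive allocation updates (renewals) by $\mathcal{F}^{-1} = \min\{T_c^i,T_c^j\} + \max\{\Delta_i,\Delta_j\}$, where I write $\Delta_i := T_c^i+T_w^i$ and $\Delta_j := T_c^j+T_w^j$; since every gap between consecutive renewals is then at most $\mathcal{F}^{-1}$, the update frequency is at least $\mathcal{F}$. I model each active agent as cycling deterministically: right after each replanning-completion (broadcast) instant $t_n$ it waits for $T_w$ and then computes for $T_c$, reaching the next completion $t_{n+1} = t_n + \Delta$; hence its completions are spaced exactly $\Delta$ apart and it occupies the waiting interval $[t_n, t_n + T_w)$ right after each $t_n$. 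Recalling Section~\ref{subsection:protocol}, a renewal between $i$ and $j$ is triggered at an instant $\tau$ precisely when $\tau$ is a completion time of one of the two agents while the other lies inside one of its waiting intervals. Without loss of generality assume $T_c^i \le T_c^j$, so that $\min\{T_c^i,T_c^j\}=T_c^i$.

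The key step is the claim that after every completion time $t_m^j$ of agent $j$, a renewal with agent $i$ occurs at some instant in $[t_m^j,\;t_m^j+T_c^i]$. To see this, look at the phase of agent $i$ at time $t_m^j$. If agent $i$ is then in a waiting interval, $\tau=t_m^j$ is a completion time of $j$ with $i$ waiting, so the renewal happens at $t_m^j$ itself. If instead agent $i$ is in a computation interval at $t_m^j$, let $t_n^i$ be the completion time ending that interval; since the interval has length $T_c^i$ and contains $t_m^j$, we get $0 < t_n^i - t_m^j \le T_c^i$. At $t_n^i$ agent $i$ completes a replanning, and agent $i$ still lies inside agent $j$'s waiting interval $[t_m^j, t_m^j+T_w^j)$, because $t_n^i - t_m^j \le T_c^i \le T_c^j < T_w^j$, the final strict inequality being exactly the standing assumption $T_w^j > T_c^j$. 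Hence a renewal occurs at $t_n^i \in (t_m^j, t_m^j + T_c^i]$, which proves the claim. Note that the hypothesis $T_w > T_c$ enters here and only here: it guarantees that the ``slow'' broadcaster is still listening by the time the ``fast'' agent has finished its current computation.

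It remains to chain these renewals. Let $t_0^j < t_1^j < \cdots$ enumerate the completion times of agent $j$ at or after $t^{R_1}$, so $t_{m+1}^j - t_m^j = \Delta_j$; by the claim there is a renewal $r_m \in [t_m^j, t_m^j + T_c^i]$ for each $m$, whence
\begin{equation*}
  r_{m+1} - r_m \le (t_{m+1}^j + T_c^i) - t_m^j = \Delta_j + T_c^i \le \max\{\Delta_i,\Delta_j\} + \min\{T_c^i,T_c^j\} = \mathcal{F}^{-1}.
\end{equation*}
The same bound controls the gap between $t^{R_1}$, where the first renewal is forced, and $r_0$, since $t_0^j - t^{R_1} \le \Delta_j$. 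Finally, $\{r_m\}$ is only a subset of the set of all renewal instants, and inserting the remaining ($i$-triggered) renewals can only shrink gaps; therefore consecutive allocation updates are never more than $\mathcal{F}^{-1}$ apart, i.e., the update frequency is at least $\mathcal{F}$.

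I expect the two-case argument behind the key claim — specifically checking in the ``computation'' case that agent $i$'s next completion still falls within agent $j$'s waiting window — to be the only genuine obstacle; the rest is bookkeeping about the periodic phase structure and the transient near $t^{R_1}$.
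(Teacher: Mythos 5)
Your proof is correct and it takes a genuinely different route from the paper's. The paper's argument (Appendix~\ref{sec:appendix-3}) enumerates in Fig.~\ref{figure:agreement-interval} all configurations of two consecutive updates, reads off the gap $\delta$ case by case, dismisses one case and bounds another under auxiliary orderings such as $T_w^i>T_c^j$ and $T_w^j>T_c^j>T_w^i>T_c^i$, and then takes the maximum over cases. You instead prove one structural claim: assuming $T_c^i \le T_c^j$, every completion time $t_m^j$ of agent $j$ is followed within $T_c^i$ by a renewal, because either agent $i$ is waiting at $t_m^j$ (renewal immediately) or agent $i$'s ongoing computation ends at some $t_n^i \le t_m^j+T_c^i$, at which instant agent $j$ is still waiting since $T_c^i \le T_c^j < T_w^j$; chaining this over $j$'s period $T_c^j+T_w^j$ (plus the transient after $t^{R_1}$) bounds every inter-renewal gap. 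This avoids the figure-dependent enumeration, uses only the standing assumption $T_w>T_c$ for each agent, and in fact yields the slightly sharper gap bound $\min\{T_c^i,T_c^j\}+T_c^j+T_w^j$ (the cycle of the slower-computing agent), which implies the theorem because $T_c^j+T_w^j \le \max\{T_c^i+T_w^i,\,T_c^j+T_w^j\}$ --- incidentally showing that the stated $\mathcal{F}$ need not be the tightest lower bound when the slower-computing agent has the shorter total cycle. Two minor points: the phrase ``agent $i$ still lies inside agent $j$'s waiting interval'' should say that $t_n^i$ lies inside agent $j$'s waiting interval $[t_m^j,\,t_m^j+T_w^j)$, i.e., that agent $j$ is still waiting at $t_n^i$; and, like the paper, your argument rests on the idealized model of fixed $T_c$, $T_w$, a strict wait-then-compute cycle, and instantaneous message exchange at completion instants.
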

\vspace{0.05in}
\begin{proof}
	Please refer to Appendix~\ref{sec:appendix-3}.
\end{proof}

\section{Simulations and Experiments}

In this section, we first illustrate the implementation of the proposed method and
then validate its performance via numerical simulations and hardware experiments.

\subsection{Algorithm Implementation} \label{implementation}

This subsection demonstrates how to implement the proposed planning algorithm on digital platforms.  


Due to the fact that the OCP~\eqref{optimal-control} is characterized in the continuous-time manner, it requires to be reformulated as a numerical optimization problem via discretization based on a sampling time interval $h^i \in \mathbb{R}$.
Accordingly, the length of horizon $K^i$ will be determined as $K^i=\lfloor \frac{T^i}{h^i} \rfloor$.
Consequently, in the $n^i$-th replanning step, the planned state $x_{n^i}^i(t_{n^i}^i+k h^i)$ at time $t_{n^i}^i+k h^i$ and the control input $u_{n^i}^i(t_{n^i}^i+(k-1) h^i)$ at time $t_{n^i}^i+ (k-1) h^i$, where 
$k \in \mathcal{K}^i \triangleq \{ 1,2,\ldots,K^i \}$
are chosen as the optimization variables. 
Towards the constraints \eqref{finite-obey-protocl-1} and \eqref{finite-obey-protocl-2}, they can be rewritten as 
$ {p_v^i}^\mathrm{T} a^{ij}(t_{n^i}^i+k h^i) > b^{ij}(t_{n^i}^i+k h^i)$, where 
$k \in \mathcal{K}^i$, $p_v^i \in V_{n^i}^i(t_{n^i}^i+k h^i)$, $V_{n^i}^i(t)$ is the collection of vertices of polygon regarding the planned state $x_{n^i}^i(t+k h^i)$,
$a^{ij}(t_{n^i}^i+k h^i)$ and $b^{ij}(t_{n^i}^i+k h^i)$ represent the hyperplane obtained at time $t_{n^i}^i+k h^i$.
Other constraints can be similarly discretized at these time steps.




In terms of implementation, an agent only needs to consider its neighbors within a specified radius. 
This reduces the number of optimization constraints and achieves manageable computational complexity and admissible scalability.
The obstacle avoidance scheme from our previous work \cite{Chen2022-2} can be introduced to apply the proposed method in obstacle environments.
Furthermore, in our implementation, an agent can determine the relative time difference between each other via communication. 
Thus, a global clock synchronization is eliminated.
Lastly, we adopt the deadlock resolution scheme in \cite{Chen2022} to prevent deadlock.

\subsection{Numerical Simulations}

In this subsection, the OCP discretized as in Section~\ref{implementation} is formulated and resolved by acados \cite{Verschueren2021}.
In addition, the proposed communication mechanism is realized via Robot Operating System (ROS).
All the simulation cases are run on a single computer with an Intel Core I9@3.5GHz, utilizing multiple programs.

\begin{table} [t]
	\caption{Agent's information and simulation results. ($T_c^i[{\rm s}]$: calculation time. $T_w^i[{\rm s}]$: waiting time. $h^i[{\rm s}]$: sampling time. $K^i$: length of horizon. $T_\text{cost}[{\rm ms}]$: maximum/average replanning runtime. $L[{\rm m}]$: transition length. $T_t[{\rm s}]$: moving time.) }
	\label{table:agent}
	\begin{tabular}{ccccccccc} 
		\toprule
		Index & Model   & $T_c^i$ & $T_w^i$ & $h^i$ & $K^i$ & $T_\text{cost}$ & $L$ &$T_t$\\
		\midrule
		1 & Bicycle  & 0.07 & 0.09 & 0.15 & 20 & 23/11 & 4.2 & 4.5\\
		2 & Double.  & 0.12 & 0.14 & 0.15 & 20 & 17/9 & 4.4 & 8.8\\
		3 & Unicycle  & 0.16 & 0.21 & 0.15 & 20 & 19/11 & 4.4 & 7.7\\
		4 & Double.  & 0.10 & 0.17 & 0.15 & 20 & 19/10 & 4.3 & 7.4\\
		5 & Bicycle  & 0.08 & 0.10 & 0.15 & 20 & 22/11 & 4.3 & 6.0\\
		6 & Double.  & 0.10 & 0.14 & 0.15 & 20 & 17/9 & 4.8 & 7.3\\
		7 & Unicycle  & 0.12  & 0.16 & 0.15 & 20 & 26/13 & 4.4 & 7.7\\
		8 & Unicycle  & 0.16   & 0.18 & 0.15 & 20 & 19/10 & 4.6 & 8.9\\
		\bottomrule
	\end{tabular}
    \vspace{-0.1in}
\end{table} 

The first simulation example is that $8$ agents navigate to their antipodal position in a circle with a diameter of $4.0$m.
The detailed information about the underlying agents and the simulation results are listed in Table~\ref{table:agent}.
Their diameters are uniformly set as $0.4$m, but their maximum speeds range from $0.6{\rm m/s}$ to $1.0{\rm m/s}$.
In this simulation,  as elaborated in Section~\ref{subsec:dynamic model}, agents with different dynamic models are considered.
The agents' trajectories and inter-agent distance are depicted in Fig.~\ref{figure:smiu1}, from which it can be observed that 
the minimum inter-agent distance is around $0.5$m, larger than the safety distance $0.4 \rm{m}$. It can be seen that safety is guaranteed during the simulation.
This task is finished within $8.9$s and the maximum transition length is $4.8$m as shown in Table~\ref{table:agent}.


Three more complicated scenarios are further simulated.
In the first scenario of Fig.~\ref{figure:16agent}, we consider a system composed of $16$ agents, confirming its effectiveness in handling large-scale systems.
In its second scenario, $8$ UGVs with bicycle model exchange their lateral positions concurrently, which simulates complicated lane changes in urban road traffic.
The result indicates a relatively smooth and fast position exchange, which further exhibits the practicability of our method in real traffic management.
To further evaluate the proposed method in an obstacle environment, we conducted a simulation as shown in Fig.~\ref{figure:8_in_obstacle}. The result indicates that the method can be extended to handle asynchronous trajectory planning in obstacle scenarios while ensuring safety.

\begin{figure}
	\centering
	\includegraphics[width=0.58\linewidth]{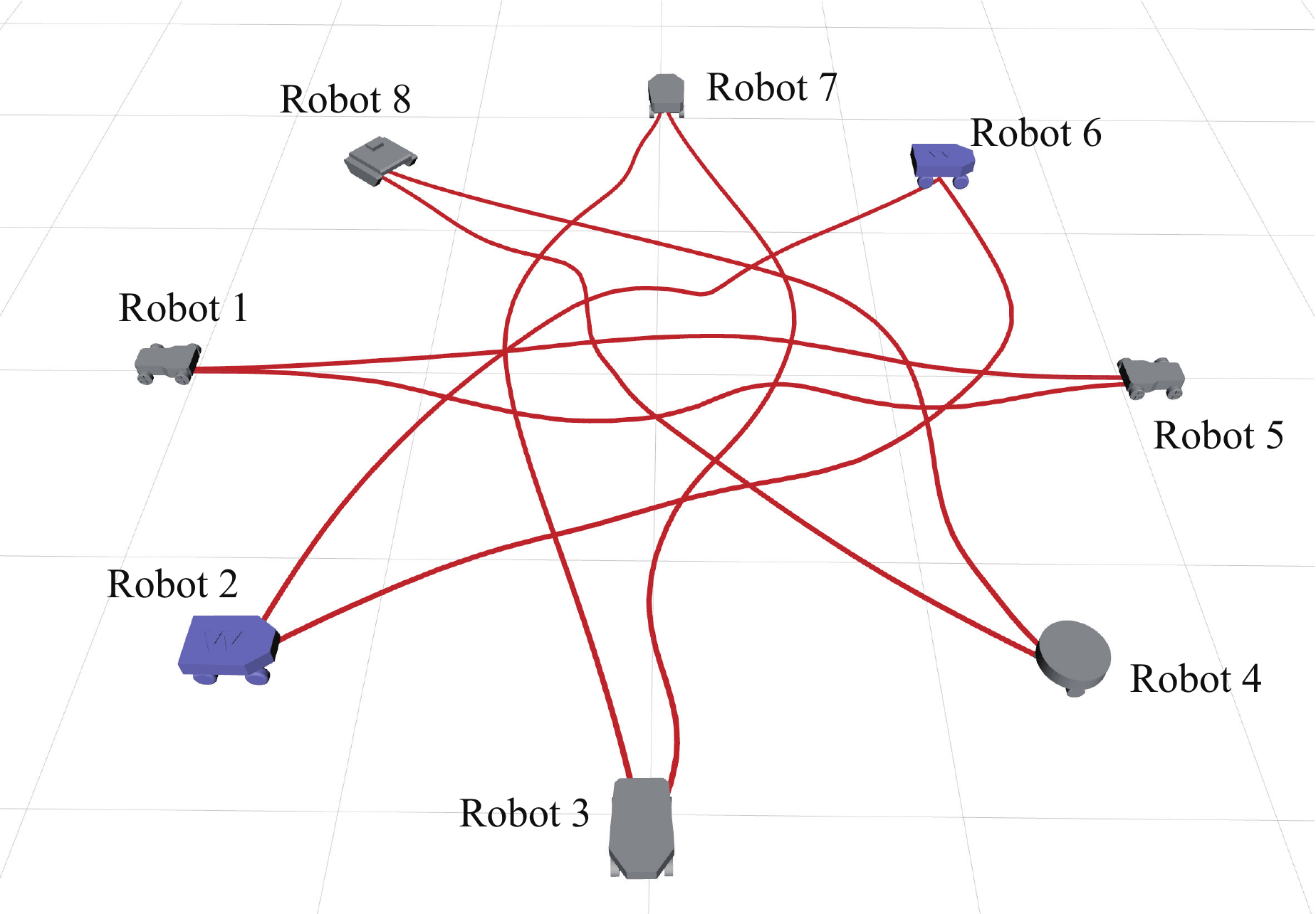}
	\includegraphics[width=0.4\linewidth]{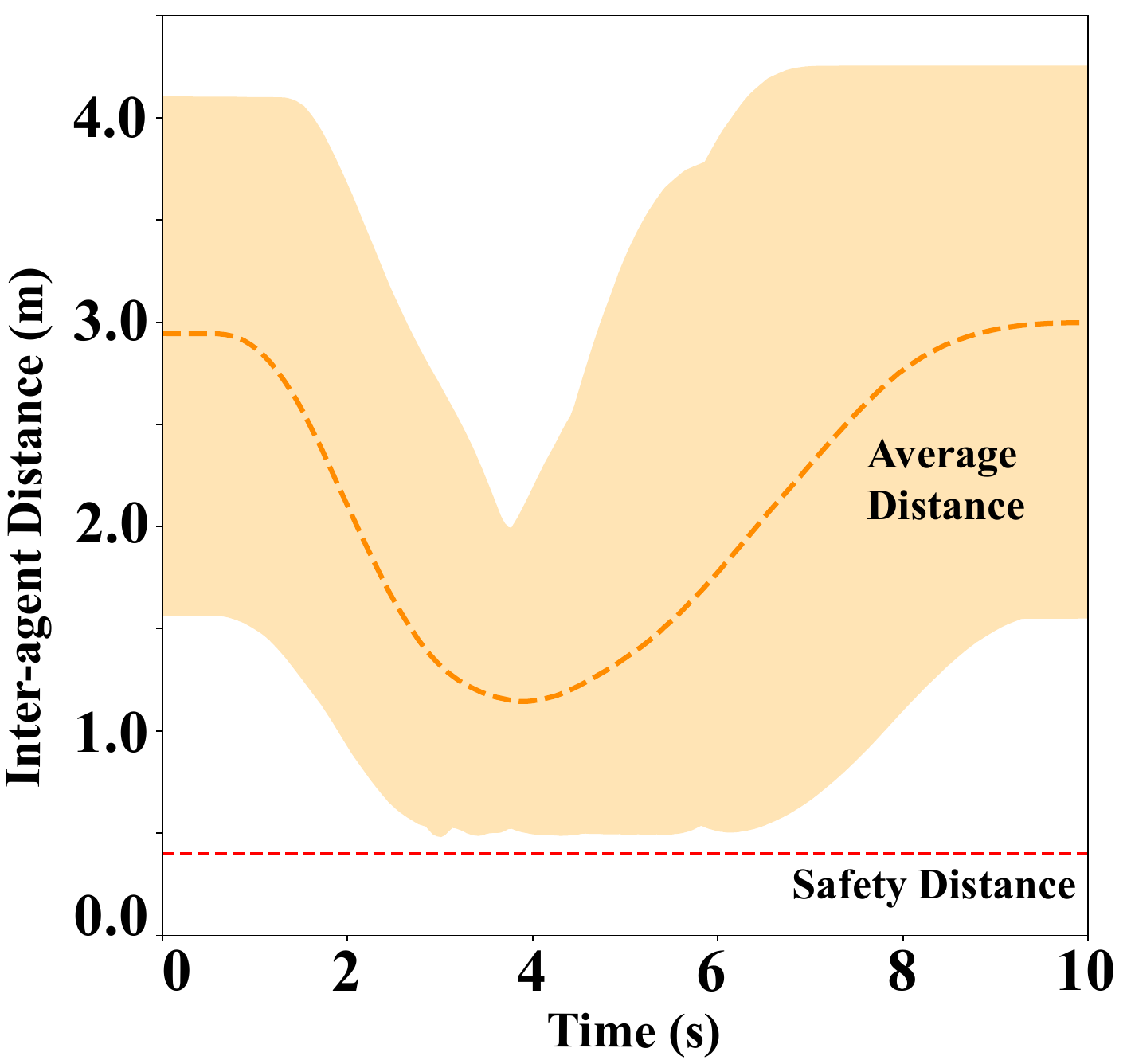}
	\caption{\textbf{Left}: The trajectories of the underlying agents. \textbf{Right}: The inter-agent distance.}
	\label{figure:smiu1}
	\vspace{-0.2in}
\end{figure}


\begin{figure} 
	\centering
	\includegraphics[width=0.6\linewidth]{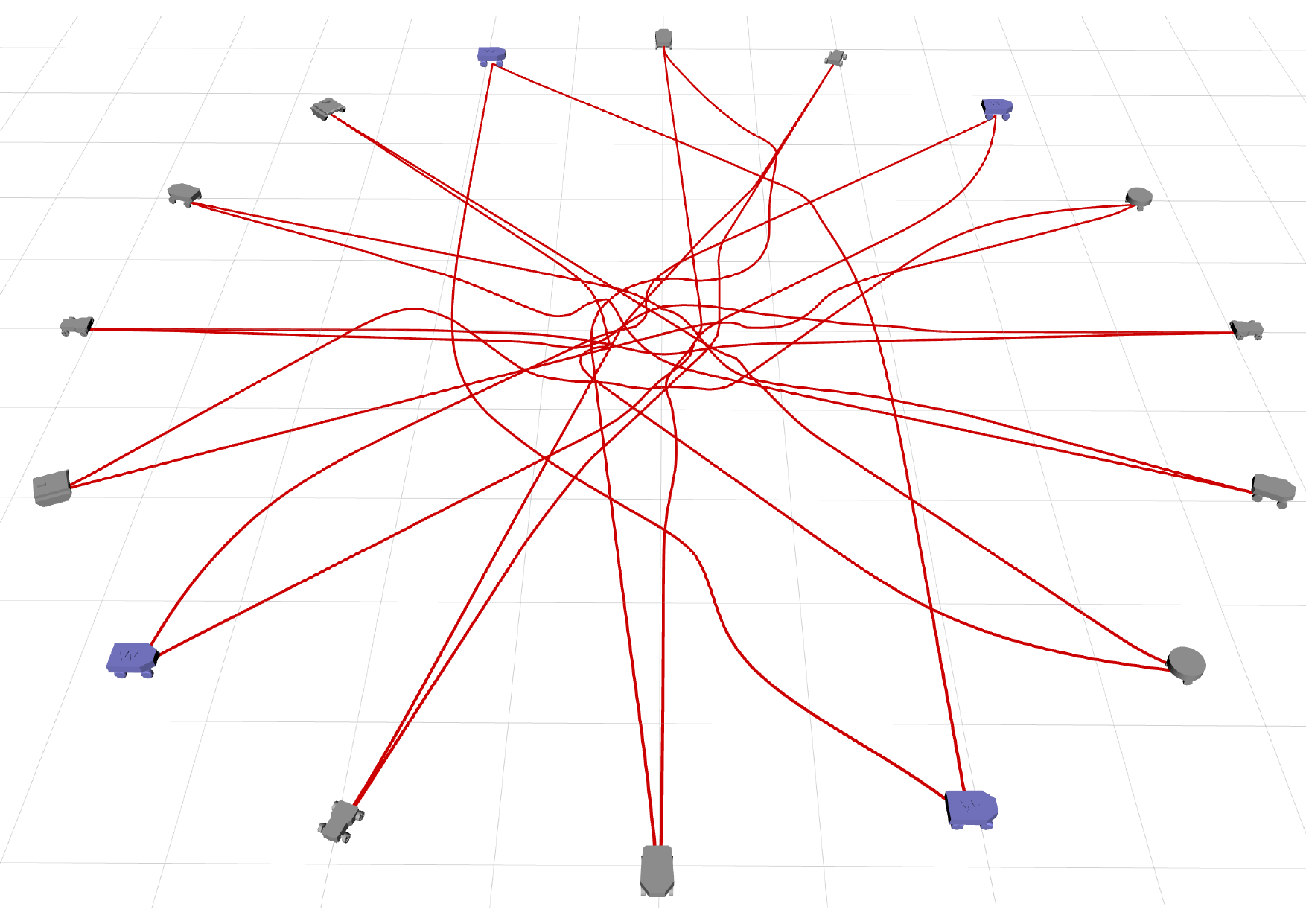}
	\includegraphics[width=0.9\linewidth]{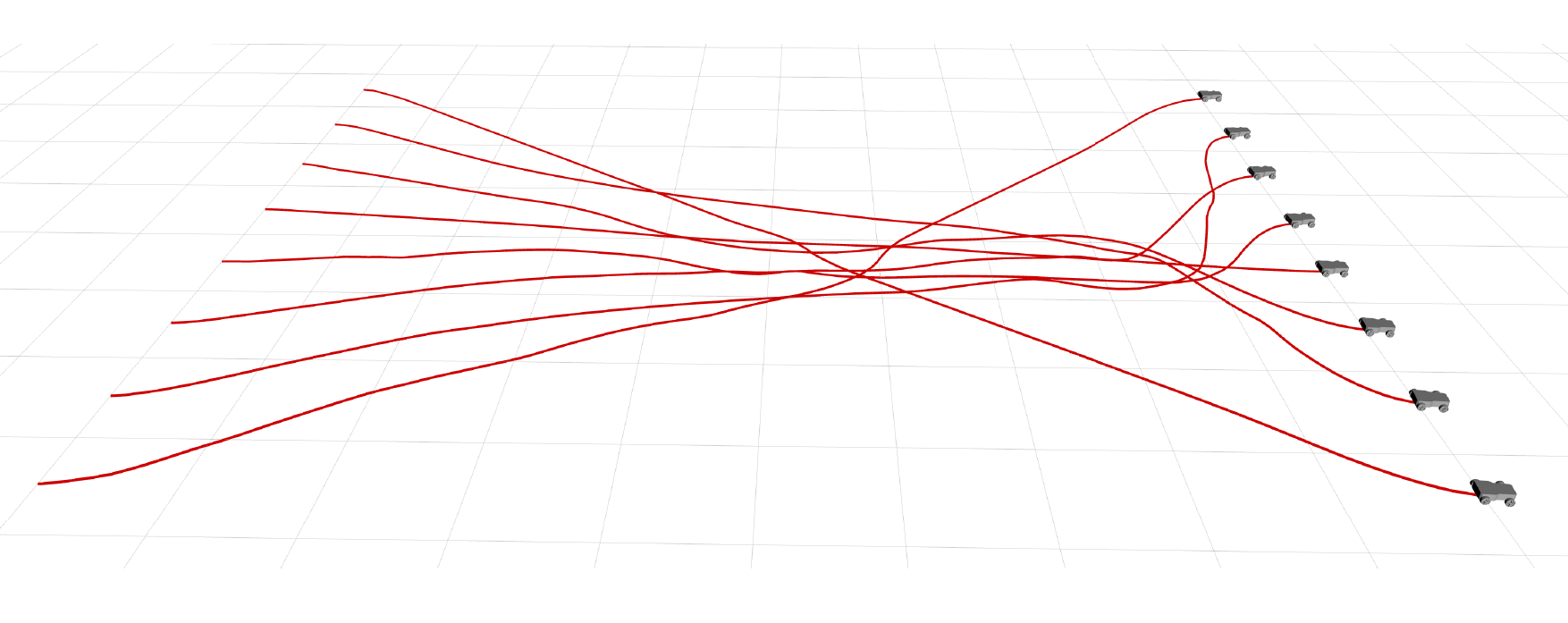}
	\caption{\textbf{Top}: The simulation of $16$ agents exchanging their positions. 
			\textbf{Bottom}: The simulation of $8$ agents changing lanes.  
		}
	\vspace{-0.3in}
	\label{figure:16agent}
\end{figure}

\begin{figure} 
	\centering
	\includegraphics[width=0.85\linewidth]{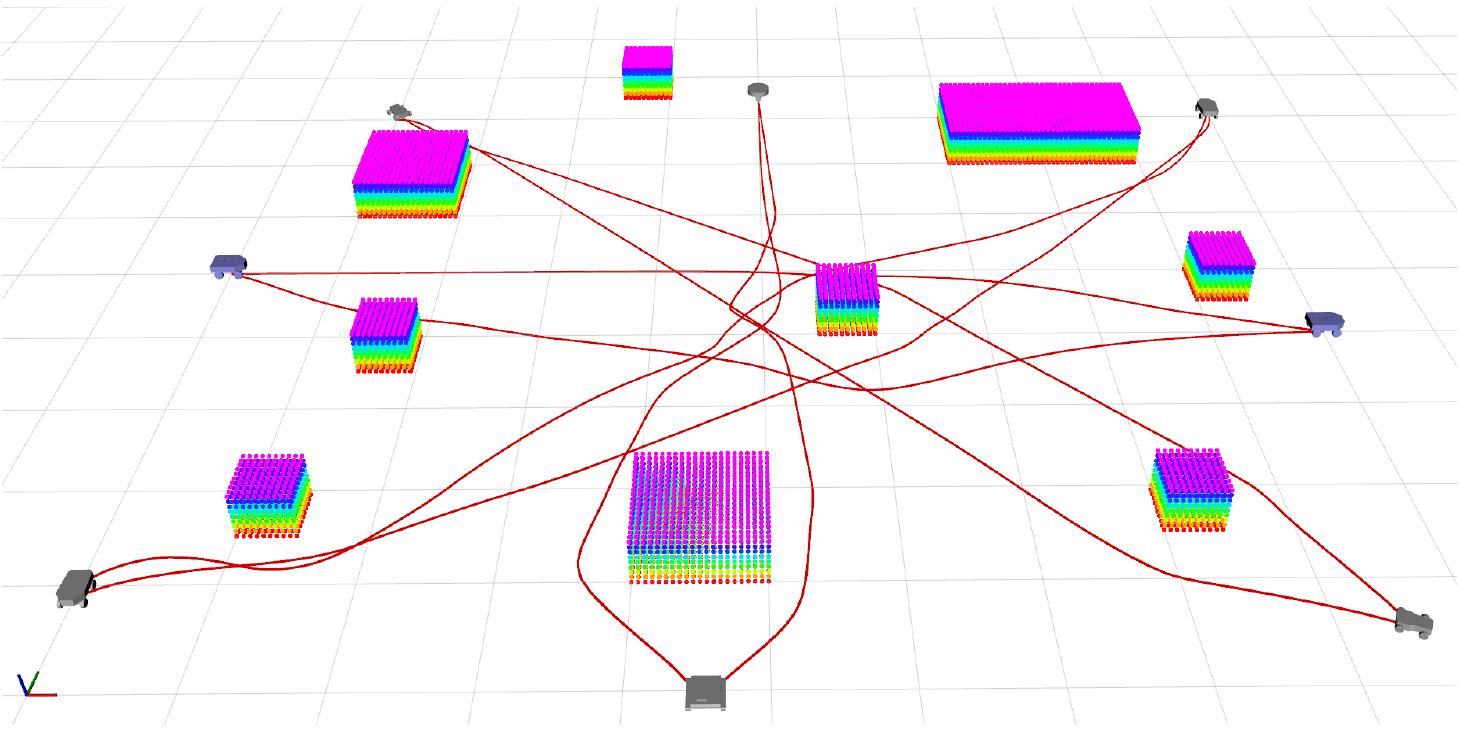}
	\caption{The simulation of 8 agents exchanging their positions in obstacle environment.}
	\vspace{-0.15in}
	\label{figure:8_in_obstacle}
\end{figure}

\begin{figure}
	\includegraphics[width=0.98\linewidth]{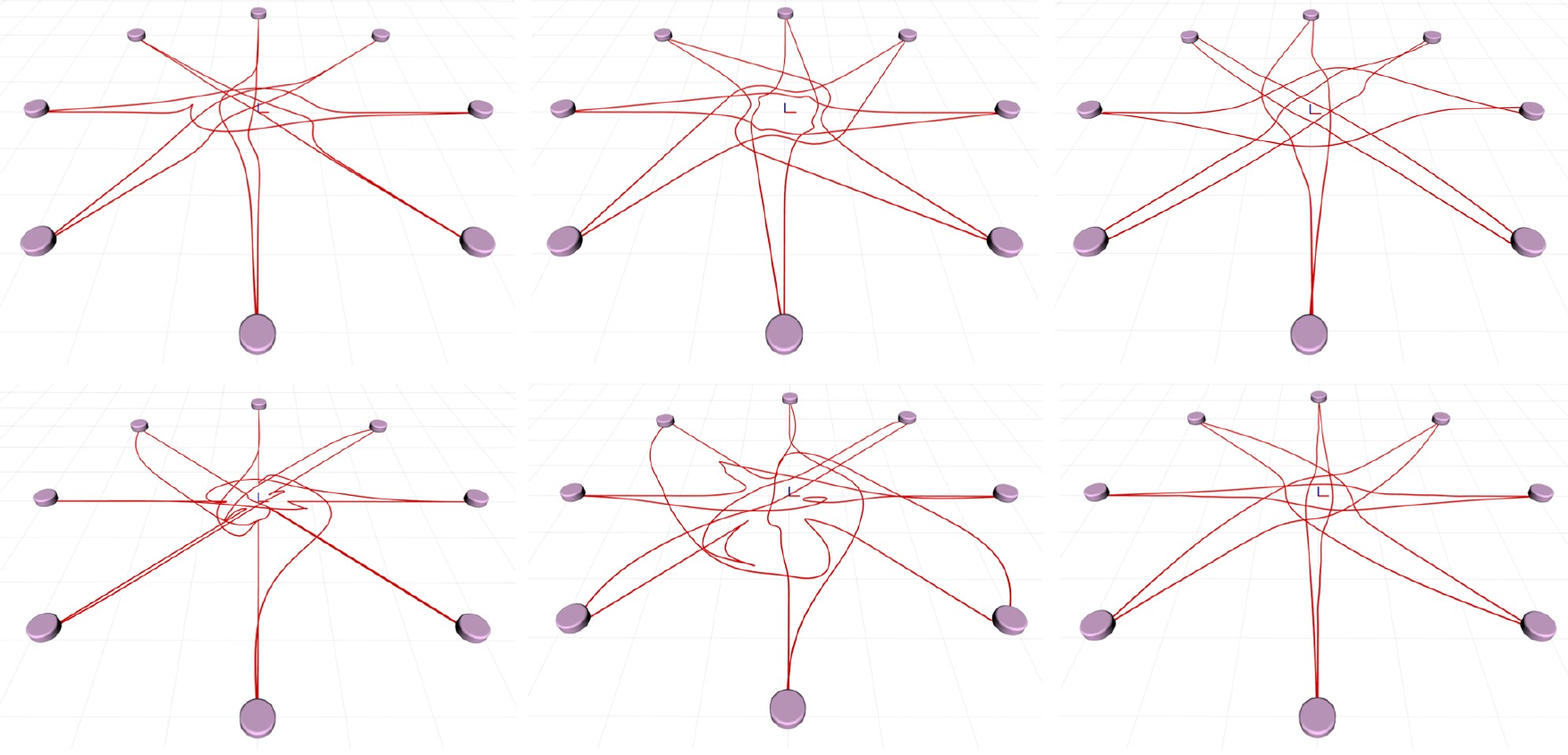}
	\caption{The results of motion planning. \textbf{Top-Left}: Ego-swarm \cite{Zhou2022}. \textbf{Top-Middle}: IMPC-DR \cite{Chen2022}. \textbf{Top-Right}: DMPC \cite{Luis2019}. \textbf{Bottom-Left}: LSC \cite{Park2022}. \textbf{Bottom-Middle}: MADER \cite{Jesus2021}. \textbf{Bottom-Right}: Proposed method.}
	\label{figure:compaison}
	\vspace{-0.2cm}
\end{figure}

\begin{table} [htbp]
	\caption{Comparisons. (${\rm Dis}$: distribution mode. $T_t[{\rm s}]$: moving time. $T_\text{cost}[{\rm ms}]$: replanning runtime. $L[{\rm m}]$: transition length. Result Representation: min/max value in the swarm.)}
	\label{table:comparison}
	\begin{tabular}{cccccccc} 
		\toprule
		Method & Dis.  & {$T_t$} & {$T_\text{cost}$}  & {$L$} \\
		\midrule
		Ego  \cite{Zhou2022}   & Seq.  & 10.0 / 11.6 & \textbf{3} / \textbf{11} & 8.33 / 8.80\\
		IMPC-DR \cite{Chen2022}   & Syn.  & 9.5 / 10.1 & 95 / 130 & 8.57 / 8.79 \\
		DMPC \cite{Luis2019}   & Syn.  & 9.4  / 10.6 & 51 / 90  & 8.43 / 8.89 \\
		LSC  \cite{Park2022}  & Syn.   & 11.1 / 12.6 & 15 / 46 & 9.11 / 9.73 \\
		MADER \cite{Jesus2021}  & Asyn. & 11.5 / 14.5 & 24 / 38 & 9.70 / 11.7\\
		Ours    & Asyn.  & \textbf{9.0}  / \textbf{9.6}  & 8 / 21 & \textbf{8.24} / \textbf{8.34}\\
		\bottomrule
	\end{tabular}
    \vspace{-0.1in}
\end{table} 

Then, we compare the proposed ASTA with five state-of-the-art methods \cite{Zhou2022,Chen2022,Luis2019,Park2022,Jesus2021}.
The dynamic models of the underlying robots are uniformly determined as double-integrators with maximum velocity and acceleration set as $1.0{\rm m/s}$ and $1.5{\rm m/s^2}$, respectively.
Furthermore, they uniformly have a planning horizon as $2.3{\rm s}$.
In this scenario, eight agents need to navigate to their antipodal positions without inter-agent collision.
The results are depicted in Fig.~\ref{figure:compaison} and provided in Table~\ref{table:comparison}.
Executed in a sequential (Seq.) way, the Ego-swarm \cite{Zhou2022} outperforms other methods in computing time and has a considerably good transition time and distance. 
The LSC \cite{Park2022}, IMPC-DR \cite{Chen2022} and DMPC \cite{Luis2019} all adopt synchronously (Syn.) concurrent replanning.
All of them can complete this task effectively, but with a relatively longer transition distance compared to Ego-Swarm.
In addition, replanning in an asynchronous (Asyn.) way, MADER in \cite{Jesus2021} has a relatively worse performance in this crowded scenario. 
This may be due to its check-recheck scheme, which results in more conservative trajectories.
In contrast, our method provides quicker and shorter transitions in addition to a relatively lower computational cost.
In summary, with additional capability to deal with the task in an asynchronous way, the proposed method not only outperforms its asynchronous counterpart in \cite{Jesus2021}, but also performs better than the others in \cite{Zhou2022,Chen2022,Luis2019,Park2022}.

\subsection{Hardware Experiments}

\begin{figure} 
	\centering
	\includegraphics[width=0.7\linewidth]{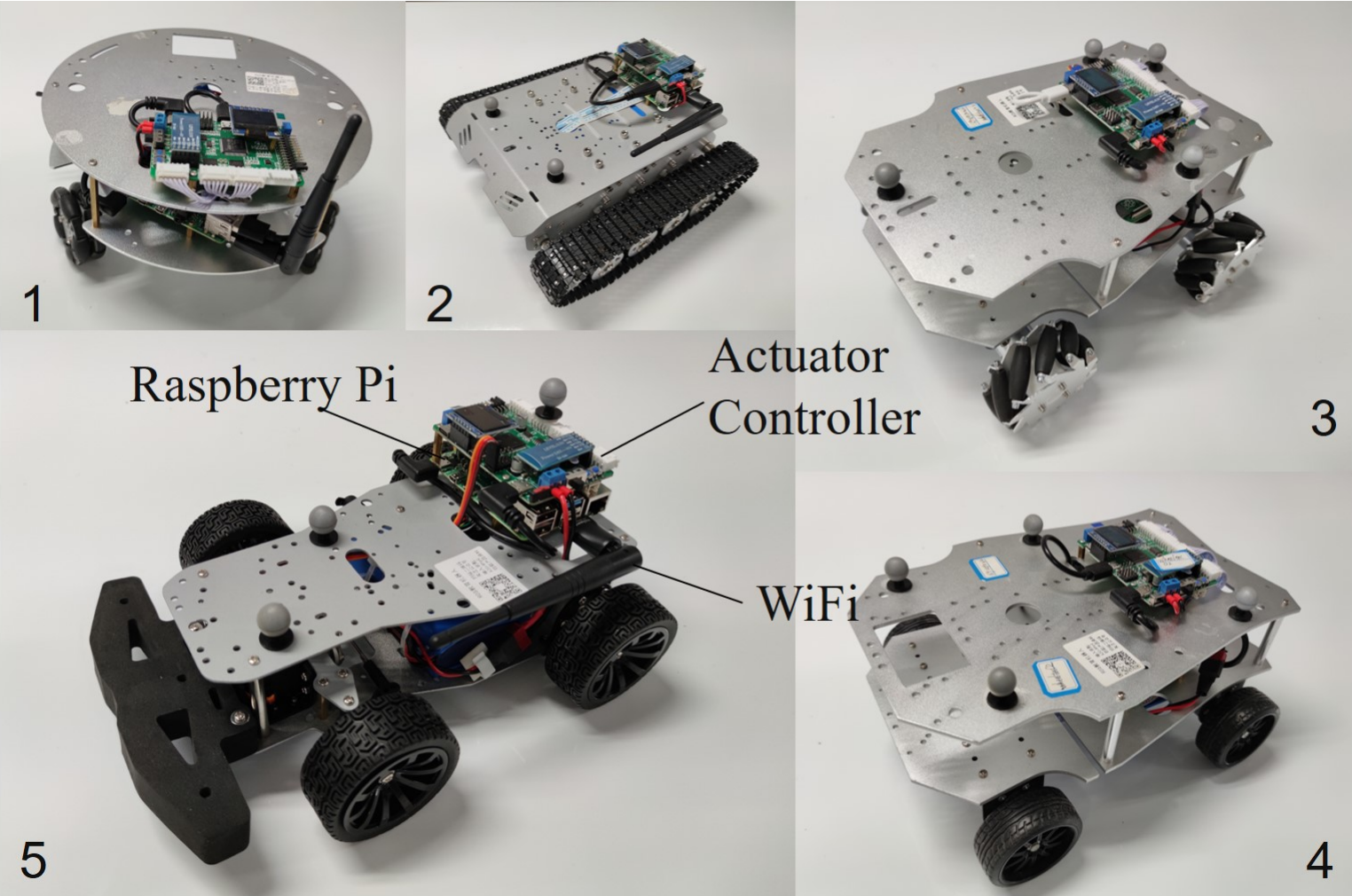}
	\includegraphics[width=0.65\linewidth]{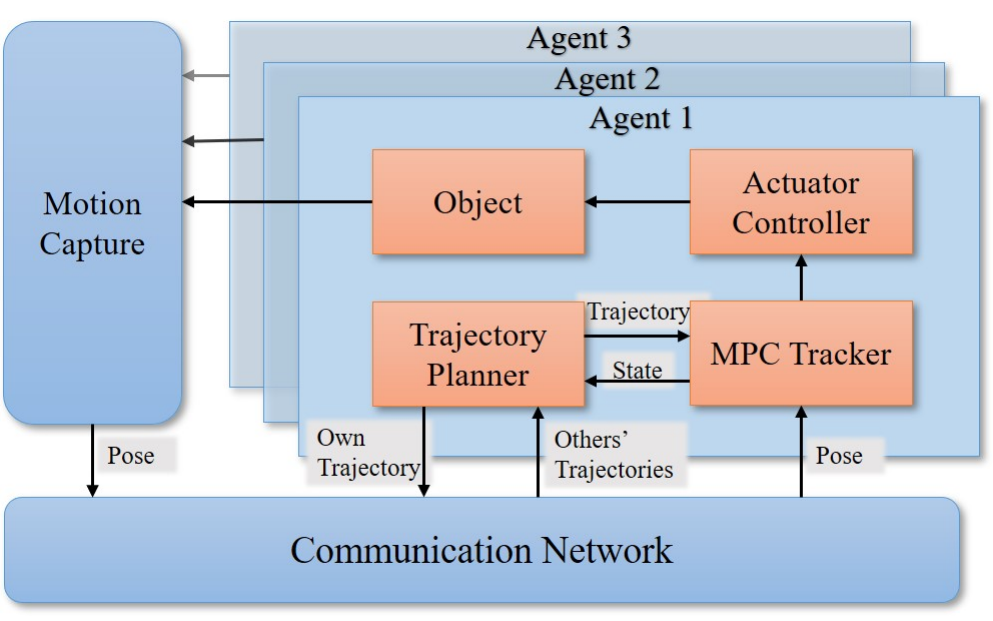}
	\caption{\textbf{Top}: The objects of hardware experiments. Five kinds of UGV are: 1) omni-directional car, 2) tank, 3) Mecanum car, 4) unicycle car, and 5) Ackermann car. \textbf{Bottom}: The system architecture of hardware experiments. }
	\label{figure:system}
\end{figure}

In this subsection, we verify the effectiveness of our method via hardware experiments.
In these experiments, $8$ UGVs of $5$ different hardware models are involved as shown in Fig.~\ref{figure:system}.
Their dynamic models include unicycles, double integrators, and kinematic bicycles.
The radii of these agents range from $0.15$m to $0.26$m, and their maximum velocities vary from $0.5{\rm m/s}$ to $1.0{\rm m/s}$.
The architecture of this multi-robot system is depicted in Fig.~\ref{figure:system}. 
The agents' positions are obtained via the indoor motion capture system OptiTrack and transmitted to other agents via WiFi and the ROS-based communication system.
The MPC-based trajectory tracker is utilized to track the planned trajectory whose results are sent to the actuator controller.
The computation of the trajectory planner, the MPC tracker and the actuator controller is accomplished by a Raspberry Pi 4B onboard computer.
To address the communication delays in experiments, we implemented a feedforward compensation mechanism to mitigate their impact.

The first scenario named ``crossing" lets eight UGVs cross in a $3.6{\rm m} \times 4.2 {\rm m}$ rectangle ground.
Since the maximum radius of the underlying agents reaches up to $0.26$m, 
this testing ground is thus crowded.
The actual moving snapshot is provided in Fig.~\ref{figure:antipodal-8}, where these agents can finish this navigation without any inter-agent collision. 
Furthermore, the average speed of all UGVs can reach up to $80\%$ of their corresponding maximum velocities, which demonstrates a considerably high agility.

\begin{figure}
	\centering
	\includegraphics[width=0.9\linewidth]{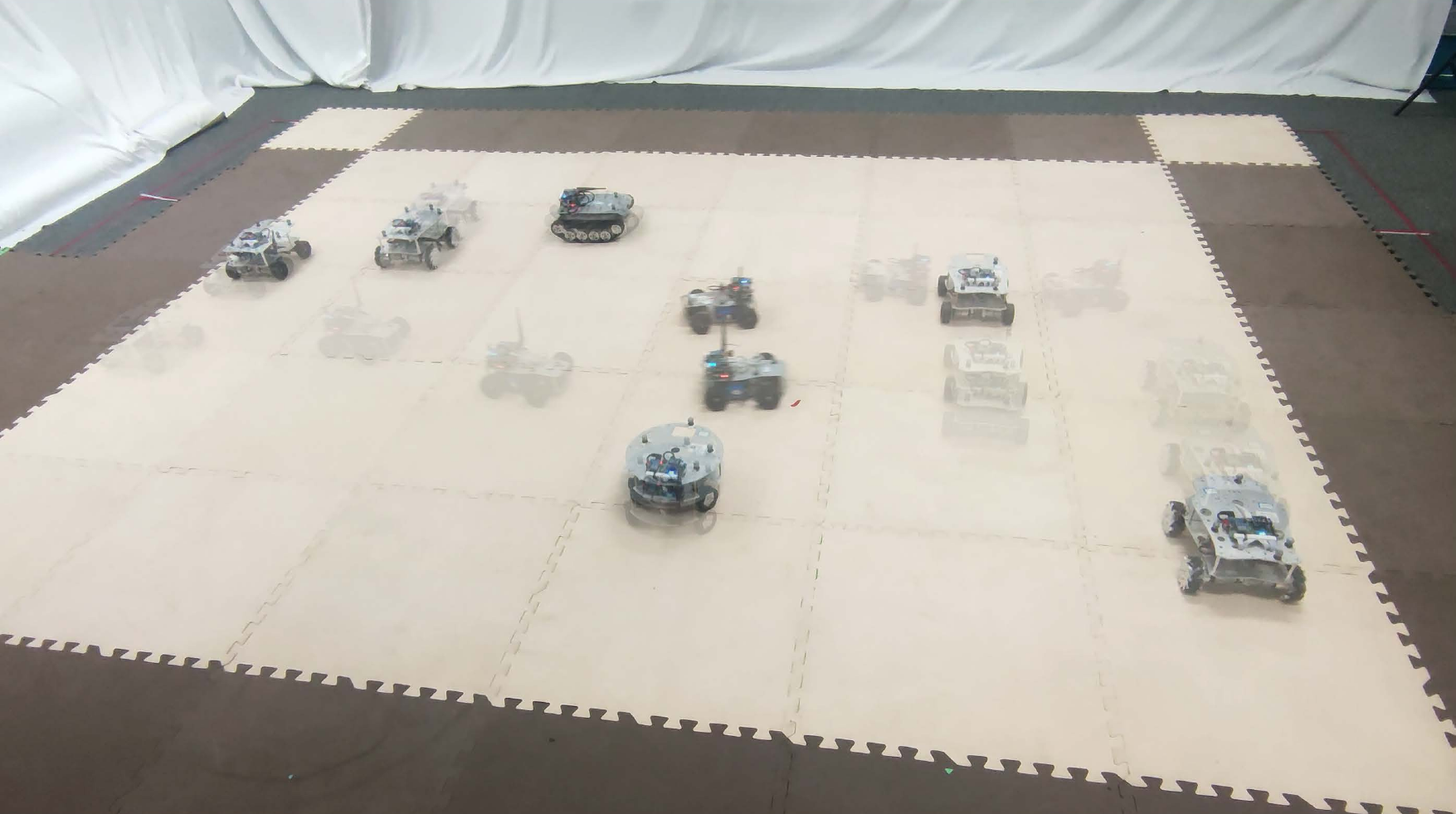}
	\caption{A snapshot of the experiment of un-signalized intersection.}
	\label{figure:crossing}
	\vspace{-0.22in}
\end{figure}

The second scenario is utilized to show the scalability of our method when an agent encounters new neighbors at an un-signalized intersection. 
Six agents carry on reciprocating motion in a specific area.
After tens of seconds, the other two Ackermann UGVs launch their trajectory planning procedures and set up communication with others.
Then, these two agents are sent to pass through this area. 
The result is illustrated in Fig.~\ref{figure:crossing}.
Two Ackermann UGVs only take less than $5$s to finish this task, which means that their average speeds can reach up to almost $0.95{\rm m/s}$, i.e., $95\%$ of the maximum velocity.
For more information about those experiments, please refer to the supplementary video.

\section{Conclusion}

This work has proposed a novel distributed asynchronous trajectory planning method called Asynchronous Spatial-Temporal Allocation for multi-agent systems with heterogeneous constrained dynamics.
Theoretical analysis, numerical simulations and hardware experiments all validate its effectiveness and performance.
Future work will extend the proposed method to address scenarios involving communication delays or malicious neighbors.

\appendices
\section{The Equivalence of Optimization Problems} \label{sec:appendix-1}
\begin{lemma} \label{lemma:finite-horizon}
	OCP~\eqref{original-optimal-control} with constraints~\eqref{original-obey-protocol} is equal to OCP \eqref{optimal-control}.
\end{lemma}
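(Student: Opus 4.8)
The plan is to show that the two problems, OCP~\eqref{original-optimal-control} with \eqref{original-obey-protocol} and OCP~\eqref{optimal-control}, have the same feasible set; since they carry the same objective $C(x_{n^i}^i,u_{n^i}^i)$ and the same constraints \eqref{dynamic-model} and \eqref{initial-cons}, equality of the feasible sets immediately gives the same optimal value and the same set of minimizers, which is the meaning of ``equal'' here. It therefore suffices, for each fixed neighbour $j\in\mathcal{N}^i$, to prove that the infinite-horizon inclusion \eqref{original-obey-protocol}, i.e. $\mathcal{T}raj_{n^i}^i\subset\mathbb{A}_m^{ij}$, holds if and only if the finite-horizon constraints \eqref{finite-obey-protocl-1} and \eqref{finite-obey-protocl-2} both hold.

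First I would unfold the definitions. By Definition~\ref{def:trajectory} and Definition~\ref{def:STA}, and because the $n^i$-th replanning is carried out at time $t_{n^i}^i\ge t^{R_1}$ so that $\mathbb{A}_m^{ij}$ is defined on the whole ray $[t_{n^i}^i,+\infty)$, the inclusion $\mathcal{T}raj_{n^i}^i(t_{n^i}^i,+\infty)\subset\mathbb{A}_m^{ij}$ is equivalent to the pointwise family of half-space inclusions $S^i(x_{n^i}^i(t))\subseteq\mathcal{H}^{ij}(t)$ for all $t\ge t_{n^i}^i$, where $\mathcal{H}^{ij}(t)$ denotes the TSHS carried by $\mathbb{A}_m^{ij}$ at time $t$. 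The task is then to collapse this ray-indexed family to finitely many of its members.

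The key ingredients are two ``stationarity beyond a finite time'' facts. From the receding-horizon convention of Section~\ref{sec:problem-statement} we have $x_{n^i}^i(t)=x_{n^i}^i(t_{n^i}^i+T_{n^i}^i)$, hence $S^i(x_{n^i}^i(t))=S^i(x_{n^i}^i(t_{n^i}^i+T_{n^i}^i))$, for all $t>t_{n^i}^i+T_{n^i}^i$. From the update rule \eqref{protocol-update} together with the argument already given in Section~\ref{subsection:protocol} (and using $t_s^{R_m}<t_e^{R_m}$, which holds since a replanning terminates before the previously committed horizon expires), the TSHS carried by $\mathbb{A}_m^{ij}$ satisfies $\mathcal{H}^{ij}(t)=\mathcal{H}^{ij}(t_e^{R_m})$ for all $t>t_e^{R_m}$. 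I would then split $[t_{n^i}^i,+\infty)$ into $[t_{n^i}^i,t_{n^i}^i+T_{n^i}^i]$, $(t_{n^i}^i+T_{n^i}^i,t_e^{R_m}]$, and $(t_e^{R_m},+\infty)$, the middle block being empty when $t_{n^i}^i+T_{n^i}^i\ge t_e^{R_m}$. On the first block the pointwise inclusion is literally \eqref{finite-obey-protocl-1}. On the second block the shape is frozen at $S^i(x_{n^i}^i(t_{n^i}^i+T_{n^i}^i))$, so the inclusion becomes $S^i(x_{n^i}^i(t_{n^i}^i+T_{n^i}^i))\subseteq\mathcal{H}^{ij}(t')$ for $t'\in(t_{n^i}^i+T_{n^i}^i,t_e^{R_m}]$, which is exactly \eqref{finite-obey-protocl-2}. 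On the third block both the shape and the TSHS are frozen, so the inclusion collapses to the single condition $S^i(x_{n^i}^i(t_{n^i}^i+T_{n^i}^i))\subseteq\mathcal{H}^{ij}(t_e^{R_m})$, which is implied by \eqref{finite-obey-protocl-2} when the middle block is nonempty and otherwise follows from \eqref{finite-obey-protocl-1} evaluated at $t=t_{n^i}^i+T_{n^i}^i$, since there $\mathcal{H}^{ij}(t_{n^i}^i+T_{n^i}^i)=\mathcal{H}^{ij}(t_e^{R_m})$. Reading these three equivalences in both directions, and then intersecting over $j\in\mathcal{N}^i$, closes the argument.

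I expect the main obstacle to be the endpoint bookkeeping near $t_e^{R_m}$: one must check that the stationarity of the separating hyperplane established in Section~\ref{subsection:protocol} for a single renewal $\mathcal{R}_m^{ij}$ indeed transfers to the spliced allocation $\mathbb{A}_m^{ij}$ of \eqref{protocol-update}, and one must carefully treat the regime where the current planning horizon overshoots $t_e^{R_m}$, so that \eqref{finite-obey-protocl-2} is vacuous and the infinite tail must be absorbed through the right endpoint of \eqref{finite-obey-protocl-1}. The remaining steps are routine manipulations of set inclusions once the definitions are expanded.
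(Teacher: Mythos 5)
Your proposal is correct and follows essentially the same route as the paper's own proof: reduce the infinite-horizon inclusion to pointwise half-space conditions, use the frozen state beyond $t_{n^i}^i+T_{n^i}^i$ and the frozen TSHS beyond $t_e^{R_m}$, and split the time axis so that \eqref{finite-obey-protocl-1} and \eqref{finite-obey-protocl-2} cover all blocks in both directions. Your explicit treatment of the case $t_{n^i}^i+T_{n^i}^i>t_e^{R_m}$ (where \eqref{finite-obey-protocl-2} is vacuous and the tail is absorbed via the endpoint of \eqref{finite-obey-protocl-1}) is a slightly more complete bookkeeping than the paper's two-case split, but the argument is otherwise the same.
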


\begin{proof}
	To begin with, we will prove that a trajectory satisfying constraints~\eqref{finite-obey-protocl-1} and \eqref{finite-obey-protocl-2} can also satisfy constraints~\eqref{original-obey-protocol}. 
	For $t \in [t_{n^i}^i,t_{n^i}^i+T_{n^i}^i]$, if we have $S^i(x_{n^i}^i(t)) \in \mathcal{H}^{ij} (t)$, i.e., equation~\eqref{finite-obey-protocl-1},  then we can get that
	\begin{equation} \label{finite-horizon-protocol}
		\mathcal{T}raj_{n^i}^i (t_{n^i}^i,t_{n^i}^i+T_{n^i}^i) \subset \mathbb{A}_m^{ij} (t_{n^i}^i,t_{n^i}^i+T_{n^i}^i).
	\end{equation}
	
	If $t_e^{R_m} = t_{n^i}^i+T_{n^i}^i$, 
	for $t > t_{n^i}^i+T_{n^i}^i$, 
	$x_{n^i}^i(t) = x_{n^i}^i{( t_{n^i}^i+T_{n^i}^i)}$ and 
	$\mathcal{H}^{ji}(t) = \mathcal{H}^{ji}(t_e^{R_m}) = {\mathcal{H}^{ji} (t_{n^i}^i+T_{n^i}^i)}$ hold, then $S^i(x_{n^i}^i(t)) \in \mathcal{H}^{ij} (t)$ also holds, since ${S^i(x_{n^i}^i(t_{n^i}^i+T_{n^i}^i))} \in \mathcal{H}^{ij} (t_{n^i}^i+T_{n^i}^i)$.
	Combined with \eqref{finite-horizon-protocol}, we can obtain that 
	\begin{equation} \label{obey-protocol}
		\mathcal{T}raj_{n^i}^i (t_{n^i}^i,+\infty) \subset \mathbb{A}_m^{ij} (t_{n^i}^i,+\infty)
	\end{equation}
	which is equal to $\mathcal{T}raj_{n^i}^i \subset \mathbb{A}_m^{ij}$.
	
	Otherwise, we will consider the situation that $t_e^{R_m} > t_{n^i}^i+T_{n^i}^i$.
	Case 1): For $t > t_e^{R_m}$, $x_{n^i}^i(t) = x_{n^i}^i(t_e^{R_m})$ and $\mathcal{H}^{ji}(t) = \mathcal{H}^{ji}(t_e^{R_m})$ hold.
	Then, $S^i(x_{n^i}^i(t)) \in {\mathcal{H}^{ij} (t)}$ exists since $S^i(x_{n^i}^i(t)) = S^i(x_{n^i}^i(t_{n^i}^i+T_{n^i}^i)) \in \mathcal{H}^{ij} (t) = {\mathcal{H}^{ij} (t_{n^i}^i+T_{n^i}^i)}$.
	Case 2): For ${t \in (t_{n^i}^i+T_{n^i}^i,t_e^{R_m}]}$, we have enforced that $S^i(x_{n^i}^i(t)) = S^i(x_{n^i}^i(t_{n^i}^i+T_{n^i}^i)) \in \mathcal{H}^{ij} (t)$ in \eqref{finite-obey-protocl-2}.
	Combining with \eqref{finite-horizon-protocol}, we can also obtain \eqref{obey-protocol}.
	To sum up, a trajectory satisfying constraints~\eqref{finite-obey-protocl-1} and \eqref{finite-obey-protocl-2} also satisfies constraints~\eqref{original-obey-protocol}.
	
	Next we intend to prove that a trajectory satisfying constraints~\eqref{original-obey-protocol} can also satisfy constraints~\eqref{finite-obey-protocl-1} and \eqref{finite-obey-protocl-2}.
	At first, given condition that 
	$\mathcal{T}raj_{n^i}^i \subset \mathbb{A}_m^{ij}$, it is clear that equation~\eqref{finite-horizon-protocol} holds which means constraints~\eqref{finite-obey-protocl-1} are satisfied.
	For $t>t_{n^i}^i+T_{n^i}^i$, we have $x_{n^i}^i(t) = x_{n^i}^i(t_{n^i}^i+T_{n^i}^i)$ such that $x_{n^i}^i(t) \in \mathcal{H}^{ij}(t)$ can induce that 
	$x_{n^i}^i(t_{n^i}^i+T_{n^i}^i) \in \mathcal{H}^{ij}(t)$ holds for $t>t_{n^i}^i+T_{n^i}^i$.
	This can further imply constraints~\eqref{finite-obey-protocl-2}.
	As a result, the proof is completed.
\end{proof}

\section{Proof of Theorem~\ref{theorem:safety}}\label{sec:appendix-2}

The actual trajectory for agent $i$ during $[t_s^{R_m},t_s^{R_{m+1}})$ can be written as
\begin{equation} \label{trajecotry-between-m}
    \mathcal{T}raj^i(t_s^{R_m},t_s^{R_{m+1}}) = \bigcup_{n \in \mathcal{M}^i} \ \mathcal{T}raj_n^i ( t_a,t_b ),
\end{equation} 
where $t_a = \max\left\{ t_n^i,t_s^{R_m} \right\}$,
$t_b = \min \left\{ t_{n+1}^i,t_s^{R_{m+1}} \right\}$, and
\begin{equation}
\begin{aligned}
    \mathcal{M}^i \triangleq & \left\{ n \ | \ t_n^i \in [t_s^{R_m},t_s^{R_{m+1}}) \right\} \\
    & \cup \left\{ n \ | \ t_{n+1}^i \geq t_s^{R_m}, \ \ t_n^i < t_s^{R_m} \right\}.
\end{aligned}
\end{equation}
$\mathcal{M}^i$ represents the set of all the indices of the replanning steps whose trajectories are executed during $[t_s^{R_m},t_s^{R_{m+1}})$.

Considering that agents $i$ and $j$ obey their corresponding STA, 
we have $\mathcal{T}raj_{n^i}^i \subset \mathbb{A}_m^{ij}$ and  $\mathcal{T}raj_{n^j}^j \subset \mathbb{A}_m^{ji}$, where $n^i \in \mathcal{M}^i$ and $n^j \in \mathcal{M}^j$. 
According to Definition~\ref{def:STA},
we have 
$\mathcal{T}raj_{n^i}^i \cap \mathcal{T}raj_{n^j}^j = \emptyset$
and the following equation: 
\begin{equation*}
    \mathcal{T}raj_{n^i}^i(t_a^i,t_b^i ) \cap \mathcal{T}raj_{n^j}^j(t_a^j,t_b^j ) = \emptyset,
\end{equation*}
where $t_a^i = \max\left\{ t_{n^i}^i,t_s^{R_m} \right\}$, $t_b^i = \min \left\{ t_{n^i+1}^i,t_s^{R_{m+1}} \right\}$, and similar notations are used for agent $j$.
Incorporated with \eqref{trajecotry-between-m}, the above equation can be converted into
\begin{equation*}
    \mathcal{T}raj^i(t_s^{R_m},t_s^{R_{m+1}}) \cap \mathcal{T}raj^j(t_s^{R_m},t_s^{R_{m+1}}) = \emptyset.
\end{equation*}
This proves that the actual trajectories of agents $i$ and $j$ during time interval $[t_s^{R_m},t_s^{R_{m+1}})$ have no overlap.
Moreover, this property holds for $m = 1,2,\ldots, +\infty$, which means that the inter-agent collision can be avoided after $t_s^{R_1}$. 

Then, we discuss the time interval $[t^{R_1},t_s^{R_1}]$.
Note that no matter how STA is updated, STA in this interval is fixed as 
$\mathbb{A}_m^{ij}(t^{R_1},t_s^{R_1}) = \mathcal{R}_1^{ij}(t^{R_1},t_s^{R_1})$ ($m = 1,2,\ldots$).
Thus, for agents $i$ and $j$, any of their planned trajectories satisfies $\mathcal{R}_1^{ij}(t^{R_1},t_s^{R_1})$ and $\mathcal{R}_1^{ji}(t^{R_1},t_s^{R_1})$, respectively, thus is collision-free.
In particular, they are collision-free inherently when reaching their first renewal,  because of the precondition.
Therefore, in a similar way as above we can prove that  $\mathcal{T}raj^i(t^{R_1},t_s^{R_1}) \cap \mathcal{T}raj^j(t^{R_1},t_s^{R_1}) = \emptyset$.
Then, the proof is completed.

\begin{figure*}[t]
	\centering
	\includegraphics[width=0.75\linewidth]{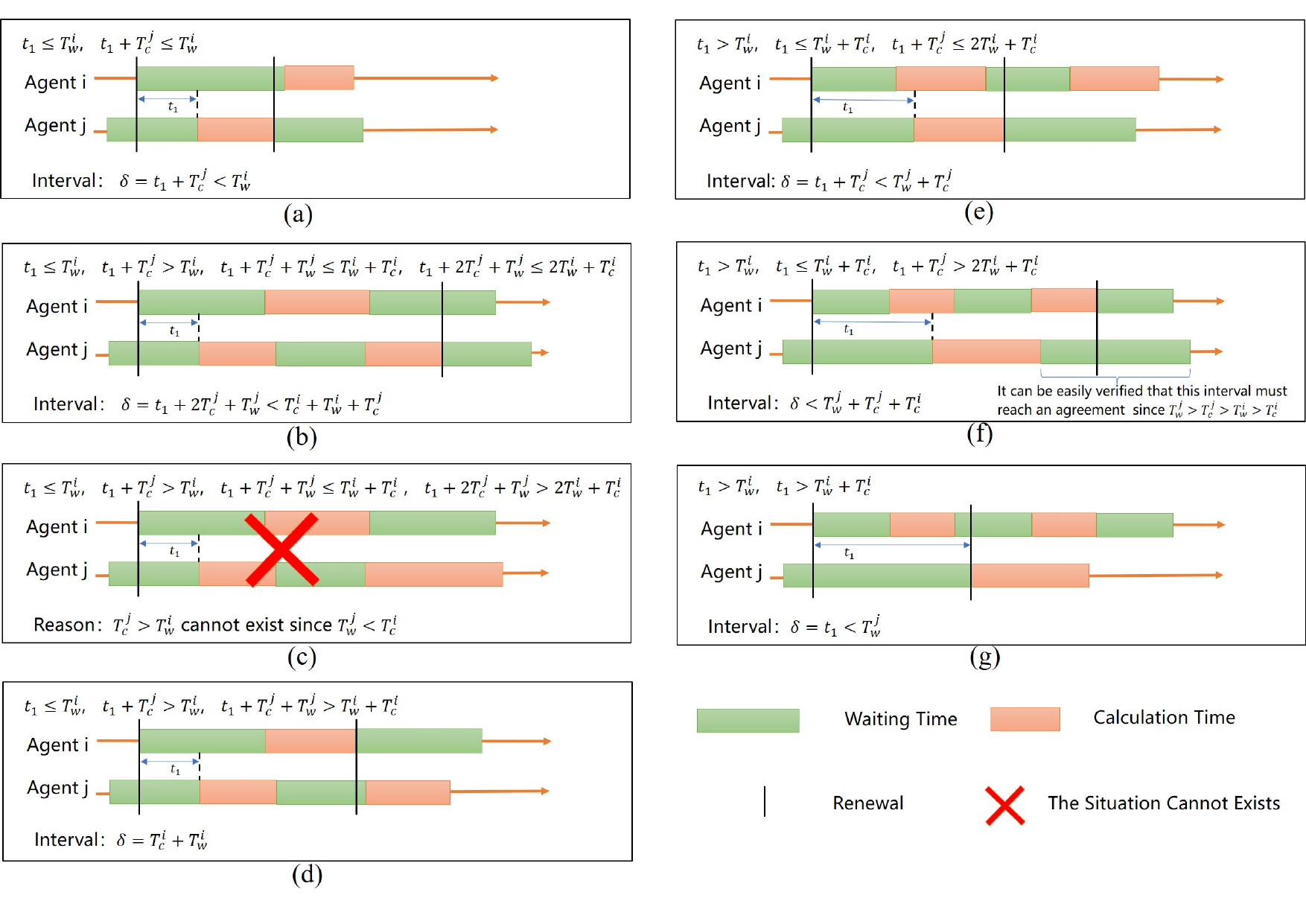}
	\caption{ In this figure, we illustrate all the possible cases of two consecutive allocation update which is dedicated for the proof of Theorem~\ref{theorem:lower-bound-frequency}. For brevity, we rewrite calculation time as $T_c^i$ and waiting time as $T_w^i$.}
	\label{figure:agreement-interval}
    \vspace{-0.2in}
\end{figure*}

\section{Proof of Theorem~\ref{theorem:lower-bound-frequency}}\label{sec:appendix-3}

We list all the possible situations in Fig.~\ref{figure:agreement-interval} to investigate the time interval $\delta$ between two adjacent updates.
Without loss of generality, assume that the first renewal is reached after agent $i$ finishes its calculation.
Accordingly, $t_1$ is defined as the time interval between their first renewal and the ending time of agent $j$'s first waiting time.
Since the situations enumerated in Fig.~\ref{figure:agreement-interval} all have the exact expression of time interval $\delta$ except for cases (c) and (f), we will pay special attention to these two scenarios.

Regarding case~(c), since the assumption $T_w^i>T_c^j$ in Section~\ref{subsection:protocol}, it can be easily verified that this case cannot exist.
Towards case~(f), since $T_w^j>T_c^j>T_w^i>T_c^i$, it can be verified that $\delta<T_c^j+T_w^j+T_c^i$.

After analyzing all the situations, it is clear that
$\delta \leq \max \left\{ T_c^i + T_w^i, T_c^j + T_w^j \right\}+\min \left\{  T_c^i,  T_c^j \right\}$ holds and the lower bound of the updating frequency can be easily obtained.

\bibliographystyle{IEEEtran}
\bibliography{REF}

\end{document}